\title{Dualing GANs}
\author{
  Yujia Li\thanks{Now at DeepMind.}\\
  Department of Computer Science\\
  University of Toronto\\
  \texttt{yujiali@cs.toronto.edu}\\
  \And
  Alexander Schwing\\
  Department of Electrical and Computer Engineering\\
  Coordinated Science Laboratory\\
  University of Illinois at Urbana-Champaign\\
  \texttt{aschwing@illinois.edu}
  \And
  Kuan-Chieh Wang\\
  Department of Computer Science\\
  University of Toronto\\
  \texttt{wangkua1@cs.toronto.edu}
  \And
  Richard Zemel\\
  Department of Computer Science\\
  University of Toronto\\
  \texttt{zemel@cs.toronto.edu}
}
\newcommand*{\ShowNotes}{}
\definecolor{darkred}{rgb}{0.7,0.1,0.1}
\definecolor{darkgreen}{rgb}{0.1,0.7,0.1}
\definecolor{cyan}{rgb}{0.7,0.0,0.7}
\definecolor{dblue}{rgb}{0.2,0.2,0.8}
\definecolor{maroon}{rgb}{0.76,.13,.28}
\definecolor{burntorange}{rgb}{0.81,.33,0}
\definecolor{tealblue}{rgb}{0.212,0.459, 0.533}
  \newcommand{\colornote}[3]{{\color{#1}\bf{#2: #3}\normalfont}}
  \newcommand{\colornote}[3]{}
\begin{document} 

\maketitle


\begin{abstract}
Generative adversarial nets (GANs) are a promising technique for
modeling a distribution from samples.  It is however well known that  GAN training suffers from
instability due to the nature of its maximin formulation.  In this paper, we explore ways to tackle the instability problem by
dualizing the discriminator.  We start from linear discriminators in which case
conjugate duality provides a mechanism to reformulate the saddle point objective into
a maximization problem, such that both the generator and the discriminator
of this `dualing GAN' act in concert. We then demonstrate how to
extend this intuition to non-linear formulations.  For GANs with linear
discriminators our approach is able to remove the instability in training,
while for GANs with nonlinear discriminators our approach provides an
alternative to the commonly used GAN training algorithm.


\end{abstract}

\vspace{-0.2cm}
\section{Introduction}\label{sec_intro}
\vspace{-0.1cm}
Generative adversarial nets (GANs) \cite{GoodfellowARXIV2014} are, among others like variational auto-encoders
\cite{KingmaARXIV2013} and auto-regressive models \cite{vandenOordARXIV2016}, a promising technique for modeling a distribution from samples. 
A lot of empirical evidence
shows that GANs are able to learn to generate images with good visual quality at
unprecedented resolution~\cite{ZhangARXIV2016,RadfordARXIV2015}, and recently
there has been a lot of research interest in GANs, to better understand their
properties and the training process.


Training GANs can be viewed as a 
duel
between a discriminator and a generator. Both players are instantiated as deep
 nets.
The generator is required to produce realistic-looking
samples that cannot be differentiated from real data by the discriminator. 
In turn, the discriminator does as good a job as possible to tell the samples apart from
real data.
Due to the complexity of the optimization problem, training GANs is 
notoriously hard, and usually suffers from problems such as mode collapse,
vanishing gradient, and divergence. The training procedures are very unstable and
sensitive to hyper-parameters.  A number of techniques have been proposed to
address these issues, some empirically justified~\cite{RadfordARXIV2015,
SalimansARXIV2016}, and some more theoretically motivated~\cite{MetzARXIV2016,
ArjovskyARXIV2017,NowozinNIPS2016, ZhaoICLR2017}.

This tremendous amount of recent work,
together with the wide variety of heuristics applied by practitioners,
indicates that many questions regarding the properties of GANs are
still unanswered. In this work we provide another perspective on the
properties of GANs, aiming toward better training algorithms in some cases.
Our study in this paper is motivated by the alternating gradient update
between discriminator and generator, employed during training of GANs.  This
form of update is one source of instability, and it is known to diverge even for
some simple problems~\cite{SalimansARXIV2016}. Ideally, when the discriminator
is optimized to optimality, the GAN objective is a deterministic function of the
generator. In this case, the optimization problem would be much easier to solve.
This motivates our idea to dualize parts of the GAN objective, offering a
mechanism to better optimize the discriminator.

Interestingly, our dual formulation provides a direct relationship between the
GAN objective and the maximum mean-discrepancy framework discussed
in \cite{GrettonJMLR2012}. When restricted to linear discriminators, where
we can find the optimal discriminator by solving the dual, this formulation
permits the derivation of an optimization algorithm that monotonically increases the
objective. 
Moreover, for non-linear discriminators we
can apply trust-region type optimization techniques to obtain more accurate
discriminators.  Our work brings to the table some additional
optimization techniques beyond stochastic gradient descent; we
hope this encourages other researchers to pursue this direction.




\vspace{-0.2cm}
\section{Background}
\vspace{-0.1cm}


In generative training we are interested in modeling of and sampling from an unknown distribution
$P$, given a set $\cD = \{\bx_1, \ldots, \bx_N\} \sim P$ of datapoints, for
example images.
GANs use a \emph{generator} network $G_\theta(\bz)$ parameterized by $\theta$,
that maps samples $\bz$ drawn from a simple distribution, \eg, Gaussian or uniform, to samples in the data space
$\hat{\bx} = G_\theta(\bz)$.  A separate
\emph{discriminator} $D_\bw(\bx)$ parameterized by $\bw$ maps a point $\bx$ in the
data space to the probability of it being a real sample.

The discriminator is trained to minimize a classification loss, typically the
cross-entropy, and the generator is trained to maximize the same loss.  On sets
of real data samples $\{\bx_1, ..., \bx_n\}$ and noise samples $\{\bz_1, ...,
\bz_n\}$, using the 
(averaged) cross-entropy loss results in the following joint optimization problem: 
\be
  \label{eq:GanOrig}
  \max_\theta \min_\bw f(\theta, \bw) \quad
  \text{where} \quad f(\theta, \bw) = -\frac{1}{2n}\sum_i \log D_\bw(\bx_i) -
  \frac{1}{2n}\sum_i \log
  (1 - D_\bw(G_\theta(\bz_i))).
\ee
We adhere to the
formulation of a fixed batch of samples for clarity of the presentation, but also point out how this process is
adapted to the stochastic optimization setting later in the paper as well as 
in the supplementary material.

To solve this maximin optimization problem, ideally, 
we want to solve for the optimal discriminator parameters $\bw^*(\theta) = \argmin_\bw
f(\theta, \bw)$, in which case the GAN program given in \equref{eq:GanOrig} can be
reformulated as a  maximization for $\theta$ using $\max_\theta f(\theta,
\bw^*(\theta))$.  However,
typical GAN training only
alternates two gradient updates $\bw\leftarrow
\bw - \nabla_\bw f(\theta, \bw)$ and $\theta\leftarrow \theta + \nabla_\theta f(\theta,
\bw)$, and usually just one step for each of $\theta$ and $\bw$ in each round.
In this case, the objective to be maximized by the generator
is $f(\theta, \bw)$ instead. This objective is always an upper bound on the
correct objective $f(\theta, \bw^*(\theta))$, since $\bw^*(\theta)$ is the
optimal $\bw$ 
for $\theta$.  Maximizing an upper bound has no guarantee on maximizing the
correct objective, which leads to instability. Therefore, many practically
useful techniques have been proposed to circumvent the difficulties of the
original program definition presented in \equref{eq:GanOrig}.

Another widely employed technique is a separate loss $-\sum_i \log
(D_\bw(G_\theta(\bz_i)))$ to update $\theta$ in order to avoid vanishing
gradients during early stages of training when the discriminator can get too
strong.  This technique can be combined with our approach, but in what follows,
we keep the elegant formulation of the GAN program specified in
\equref{eq:GanOrig}. 


\vspace{-0.2cm}
\section{Dualing GANs}
\vspace{-0.1cm}

The main idea of `Dualing GANs' is to represent the discriminator program
$\min_\bw f(\theta, \bw)$ in \equref{eq:GanOrig} using its dual, $\max_\lambda
g(\theta, \lambda)$. Hereby, $g$ is the dual objective of $f$ \wrt $\bw$, and
$\lambda$ are the dual variables.  Instead of gradient descent on $f$ to update
$\bw$,  we solve the dual instead.  This results in a 
maximization problem $\max_\theta \max_\lambda g(\theta,
\lambda)$. 

Using the dual is beneficial for two reasons. First, note that for any
$\lambda$, $g(\theta, \lambda)$ is a lower bound on the objective with optimal
discriminator parameters $f(\theta, \bw^*(\theta))$. Staying in the dual domain,
we are then guaranteed that optimization of $g$ \wrt $\theta$ makes progress in
terms of the original program.  Second, the dual problem usually involves a much
smaller number of variables, and can therefore be solved much more easily than
the primal formulation. This provides opportunities to obtain more accurate
estimates for the discriminator parameters $\bw$, which is in turn beneficial
for stabilizing the learning of the generator parameters $\theta$.  In the following, we start by
studying linear discriminators, before extending our technique to training with
non-linear discriminators. Also, we use 
cross-entropy as the classification loss,
but emphasize that other convex loss functions, \eg, the hinge-loss, can be
applied equivalently. 

\vspace{-0.1cm}
\subsection{Linear Discriminator}
\label{sec:LD}
\vspace{-0.1cm}

We start from linear discriminators that use a linear scoring function $F(\bw,
\bx)=\bw^\top\bx$, \ie, the discriminator $D_\bw(\bx) = p_\bw(y=1|\bx) =
\sigma(F(\bw, \bx))=1/[1 + \exp(-\bw^\top\bx)]$.  Here, $y=1$ indicates real
data,
while $y=-1$ for a generated sample, and 
$p_\bw(y=-1|\bx)  = 1 - p_\bw(y=1|\bx)$ characterizes the probability of $\bx$
being a generated versus real data sample.

We only require the scoring function $F$ to be linear in $\bw$ and any
(nonlinear) differentiable features $\phi(\bx)$ can be used in place of $\bx$ in this
formulation.  Substituting the linear scoring function into the objective given in \equref{eq:GanOrig}, 
results in the following program for $\bw$:
\be
\min_\bw \quad \frac{C}{2} \|\bw\|^2_2 + \frac{1}{2n}\sum_i \log(1 +
\exp(-\bw^\top\bx_i)) + \frac{1}{2n}\sum_i \log (1 + \exp(\bw^\top
G_\theta(\bz_i))).
\label{eq:LogLossPrimal}
\ee
Here we also added an L2-norm regularizer on $\bw$.
We note that the
program presented in \equref{eq:LogLossPrimal} is convex in the discriminator
parameters $\bw$. Hence, we can equivalently solve it in the dual domain as
discussed in the following claim, with proof provided in the supplementary
material.

\begin{claim}
\label{clm:dualLogLoss}
The dual program to the task given in \equref{eq:LogLossPrimal} reads as follows:
\begin{align}
  \max_\lambda \quad& g(\theta, \lambda) = -\frac{1}{2C}\left\|\sum_i \lambda_{\bx_i} \bx_i - \sum_i \lambda_{\bz_i}
  G_\theta(\bz_i)\right\|^2 + \frac{1}{2n}\sum_i H(2n\lambda_{\bx_i}) +
  \frac{1}{2n}\sum_i H(2n\lambda_{\bz_i}), \nonumber\\
  \suchthat \quad& \forall i, \quad 0 \le \lambda_{\bx_i} \le \frac{1}{2n}, \quad 0 \le
  \lambda_{\bz_i} \le \frac{1}{2n},
\end{align}
with binary entropy $H(u) = - u\log u-(1-u)\log(1-u)$. The optimal solution
to the original problem $\bw^*$ can be obtained from the optimal
$\lambda_{\bx_i}^*$ and $\lambda_{\bz_i}^*$ via
$$
\bw^* = \frac{1}{C}\left(\sum_i \lambda_{\bx_i}^* \bx_i - \sum_i
  \lambda_{\bz_i}^* G_\theta(\bz_i)\right).
$$

\end{claim}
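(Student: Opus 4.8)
The plan is to derive the dual by standard Lagrangian/Fenchel duality, exploiting that the primal in~\equref{eq:LogLossPrimal} is a sum of a strongly convex quadratic regularizer and convex logistic losses composed with linear maps of $\bw$. Because $C > 0$, the objective is strongly convex and coercive in $\bw$, so its minimizer is unique and strong duality holds with no duality gap; hence it suffices to compute the Lagrangian dual and read off the optimal $\bw$ from the stationarity condition.

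First I would decouple $\bw$ from the nonlinear losses by introducing auxiliary variables $a_i = \bw^\top\bx_i$ and $b_i = \bw^\top G_\theta(\bz_i)$ as explicit equality constraints, attaching multipliers $\lambda_{\bx_i}$ and $\lambda_{\bz_i}$ respectively, oriented as $a_i - \bw^\top\bx_i = 0$ and $\bw^\top G_\theta(\bz_i) - b_i = 0$ so that the recovered $\bw^*$ carries the stated signs. The Lagrangian then splits into three independent minimizations. Minimizing the $\bw$-dependent part $\frac{C}{2}\|\bw\|^2 - \sum_i \lambda_{\bx_i}\bw^\top\bx_i + \sum_i \lambda_{\bz_i}\bw^\top G_\theta(\bz_i)$ gives the stationarity condition $C\bw = \sum_i \lambda_{\bx_i}\bx_i - \sum_i\lambda_{\bz_i}G_\theta(\bz_i)$, which is exactly the claimed recovery formula for $\bw^*$, and substituting it back produces the quadratic term $-\frac{1}{2C}\|\sum_i\lambda_{\bx_i}\bx_i - \sum_i\lambda_{\bz_i}G_\theta(\bz_i)\|^2$ of the dual objective.

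Next I would treat each $a_i$ and $b_i$ minimization separately; each amounts to evaluating the negative of the convex conjugate of a scaled logistic loss. For the real-data term, $\min_{a_i}\{\frac{1}{2n}\log(1+e^{-a_i}) + \lambda_{\bx_i}a_i\}$ is solved where $\frac{1}{1+e^{a_i}} = 2n\lambda_{\bx_i}$, which is attainable only for $2n\lambda_{\bx_i}\in(0,1)$ and evaluates to $\frac{1}{2n}H(2n\lambda_{\bx_i})$; an entirely analogous computation on $\min_{b_i}\{\frac{1}{2n}\log(1+e^{b_i}) - \lambda_{\bz_i}b_i\}$ yields $\frac{1}{2n}H(2n\lambda_{\bz_i})$. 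The domains on which these conjugates are finite give precisely the box constraints $0\le\lambda_{\bx_i},\lambda_{\bz_i}\le\frac{1}{2n}$. Collecting the three contributions assembles $g(\theta,\lambda)$ exactly as stated.

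I expect the main obstacle to be the conjugate computation in the previous step: correctly tracking the $\frac{1}{2n}$ prefactor as it rescales the argument of the entropy to $2n\lambda$ and shrinks the feasible interval to $[0,\frac{1}{2n}]$, and handling the two losses' opposite signs ($e^{-a_i}$ versus $e^{b_i}$) so that both collapse to the same binary-entropy form. The remaining subtlety is justifying that the box constraints include their endpoints, which follows by noting that $H$ extends continuously under the convention $0\log 0 = 0$, so the conjugate stays finite on the closed interval and matches the feasible set in the claim.
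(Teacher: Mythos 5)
Your proposal is correct and follows essentially the same route as the paper's proof: decouple $\bw$ from the logistic losses via auxiliary equality-constrained variables, form the Lagrangian, eliminate the primal variables by stationarity (equivalently, per-variable conjugate computations), and read off the recovery formula $\bw^* = \frac{1}{C}\left(\sum_i \lambda_{\bx_i}^*\bx_i - \sum_i \lambda_{\bz_i}^* G_\theta(\bz_i)\right)$ together with the box constraints from the domain where the conjugates are finite. The only differences are cosmetic (the paper defines $\xi_{\bz_i} = -\bw^\top G_\theta(\bz_i)$ so both losses take the form $\log(1+e^{-\xi})$, whereas you absorb the sign into the constraint orientation), plus your welcome but inessential extra care about strong duality and the closed endpoints of the feasible interval.
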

\textbf{Remarks: } Intuitively, considering the last two terms of the program
given in \clmref{clm:dualLogLoss} as well as its constraints, we aim at
assigning weights $\lambdax$, $\lambdaz$ close to half of $\frac{1}{2n}$ to as many data points
and to as many artificial samples as possible. More carefully investigating the
first part, which can at most reach zero, reveals that we aim to match the
empirical data observation $\sum_i \lambda_{\bx_i}\bx_i$ and the generated
artificial sample observation $\sum_i \lambda_{\bz_i} G_\theta(\bz_i)$. Note that this
resembles the moment matching property obtained in other maximum likelihood
models. Importantly, this objective also resembles the (kernel) maximum mean
discrepancy (MMD) framework, where the empirical squared MMD is estimated via
$\|\frac{1}{n}\sum_{\bx_i} \bx_i - \frac{1}{n}\sum_{\bz_i}
G_\theta(\bz_i)\|_2^2$. 
Generative
models that learn to minimize the MMD objective, like the generative moment
matching networks~\cite{LiARXIV2015,dziugaite2015training}, can
therefore be included in our framework,
using fixed $\lambda$'s and proper scaling of the first term. 

Combining the result obtained in \clmref{clm:dualLogLoss}  with the training
objective for the generator yields the task $\max_{\theta,\lambda} g(\theta, \lambda)$  for training of GANs with linear
discriminators. 
Hence, instead of searching for a saddle-point, we strive to find a maximizer, a
task which is presumably easier. The price to pay is the restriction to linear
discriminators and the fact that every randomly drawn artificial sample $\bz_i$
has its own dual variable $\lambda_{\bz_i}$.

In the non-stochastic optimization setting, where we optimize for fixed sets of
data samples $\{\bx_i\}$ and randomizations $\{\bz_i\}$, it is easy to design a learning
algorithm for GANs with linear discriminators that monotonically improves the
objective $g(\theta, \lambda)$ based on line search. Although
this approach is not practical for very large data sets, such a property is
convenient for smaller scale data sets.  In addition, linear models
are favorable in scenarios in which we know 
informative features that we want the discriminator to pay attention to.

When optimizing with mini-batches we introduce new data samples $\{\bx_i\}$ and
randomizations $\{\bz_i\}$ in every iteration.  In the supplementary material we
show that this corresponds to maximizing a lower bound on the full expectation
objective.  Since the dual variables vary from one
mini-batch to the next, we need
to solve for the newly introduced dual variables to a reasonable accuracy.
For small minibatch sizes commonly used in deep learning literature, like 100,
calling a constrained optimization solver to solve the dual problem is quite
cheap. We used Ipopt~\cite{WaechterMP2006}, which  solves this dual
problem to a very good accuracy in negligible time; other solvers can
also be used and may lead to improved performance.

Utilizing a log-linear discriminator reduces the model's expressiveness and complexity.
We therefore now propose methods to alleviate this restriction.

\begin{figure}[t]
\centering
\fbox{
\begin{minipage}{0.95\linewidth}
Initialize $\theta$, $\bw_0$, $k=0$ and iterate
\begin{enumerate}\setlength\itemsep{0pt}\setlength{\parskip}{0pt}\setlength{\parsep}{0pt}
\item One or few gradient ascent steps on $f(\theta,\bw_k)$ \wrt generator parameters $\theta$
\item Find step $\bs$ using $\min_\bs m_{k,\theta}(\bs)$ s.t.~$\frac{1}{2}\|\bs\|_2^2\leq\Delta_k$
\item Update $\bw_{k+1} \leftarrow \bw_k + \bs$
\item $k \leftarrow k + 1$
\end{enumerate}
\end{minipage}
}
\caption{GAN optimization with model function.}
\label{fig:AlgoOutline}
\end{figure}



\subsection{Non-linear Discriminator}
\label{sec:NLD}
General non-linear discriminators use non-convex scoring functions $F(\bw, \bx)$, parameterized by a deep  net.  
The non-convexity of $F$ makes it hard to directly convert the problem into its dual form. 
Therefore, our approach for training GANs with non-convex discriminators is based on
repeatedly linearizing and dualizing the discriminator locally. At first sight
this seems restrictive, however, we will show that a specific setup of this
technique recovers the gradient direction employed in the regular GAN training
mechanism while providing additional flexibility.

We consider locally approximating the primal objective $f$ around a point $\bw_k$
using a model function $m_{k,\theta}(\bs)\approx f(\theta, \bw_k+\bs)$. We phrase the update
\wrt the discriminator parameters $\bw$ as a search for a step $\bs$, \ie,
$\bw_{k+1} = \bw_{k} + \bs$ where $k$ indicates the current iteration.  In order
to guarantee the quality of the approximation, we introduce a trust-region
constraint $\frac{1}{2}\|\bs\|_2^2 \leq \Delta_k \in \mathbb{R}^+$ where
$\Delta_k$ specifies the trust-region size.  More concretely, 
we search for a step $\bs$ by solving
\be
\min_\bs m_{k,\theta}(\bs) \quad\text{s.t.}\quad \frac{1}{2}\|\bs\|_2^2\leq \Delta_k,
\label{eq:ModelPrimal}
\ee
given generator parameters $\theta$. Rather than optimizing the GAN
objective $f(\theta,\bw)$ 
with stochastic gradient descent, we can instead employ this model function and use the
algorithm outlined in \figref{fig:AlgoOutline}. It proceeds by first performing
a gradient ascent \wrt the generator parameters $\theta$. Afterwards, we find a
step $\bs$ by solving the program given in \equref{eq:ModelPrimal}. We then
apply this step, and repeat.

Different model functions $m_{k,\theta}(\bs)$ result in  variants of the
algorithm.  If we choose $m_{k,\theta}(\bs) = f(\theta,\bw_k+\bs)$, model $m$
and function $f$ are identical but the program given in \equref{eq:ModelPrimal}
is hard to solve. Therefore, in the following, we propose two  model functions
that we have found to be useful. The first one is based on linearization of the cost
function $f(\theta,\bw)$ and recovers the step $\bs$ employed by gradient-based
discriminator updates in standard GAN training. The second one is based on
linearization of the score function $F(\bw,\bx)$ while keeping the loss function
intact; this second approximation is hence accurate in a larger region. Many
more models $m_{k,\theta}(\bs)$ exist and we leave further exploration of this
space to future work.

{\bf (A). Cost function linearization:}
A local approximation to the cost function $f(\theta,\bw)$ 
can be constructed by using the first order Taylor approximation
$$
m_{k,\theta}(\bs) = f(\bw_k,\theta) + \nabla_\bw f(\bw_k,\theta)^\top\bs.
$$
Such a model function is appealing because step 2 of the algorithm outlined in
\figref{fig:AlgoOutline}, \ie, minimization of the model function subject to
trust-region constraints as specified in \equref{eq:ModelPrimal}, has the
analytically computable solution
$$
\bs = -\frac{\sqrt{2\Delta_k}}{\|\nabla_\bw f(\bw_k,\theta)\|_2} \nabla_\bw f(\bw_k,\theta).
$$
Consequently step 3 of the algorithm outlined in \figref{fig:AlgoOutline} is a
step of length $2\Delta_k$ into the negative gradient direction of the cost
function $f(\theta,\bw)$. We can use the trust region parameter $\Delta_k$ to
tune the step size just like it is common to specify the step size for standard
GAN training. As mentioned before, using the first order Taylor approximation as
our model $m_{k,\theta}(\bs)$ recovers the same direction that is employed
during standard GAN training.  The value of the $\Delta_k$ parameters can be
fixed or adapted; see the supplementary material for more details.

Importantly, using the first order Taylor approximation as a model is not the
only choice. While some choices are fairly obvious, such as a quadratic
approximation, we present another intriguing option in the following.

{\bf (B). Score function linearization:}
Instead of linearizing the entire cost function as demonstrated in the previous
part, we can choose to only linearize the score function $F$, locally around
$\bw_k$, via
$$
F(\bw_k + \bs,\bx) \approx \hat F(\bs,\bx) = F(\bw_k,\bx) + \bs^\top\frac{\partial
F(\bw_k,\bx)}{\partial \bw}, \quad \forall \bx.
$$
Note that the overall objective $f$ is itself a nonlinear function of $F$. 
Substituting the approximation for $F$ into the overall objective, results in 
the following model function:
\begin{align}
  m_{k, \theta}(\bs) =\,& \frac{C}{2}\|\bw_k + \bs\|_2^2 + \frac{1}{2n}\sum_i
  \log\left(1 + \exp\left(-F(\bw_k, \bx_i) - \bs^\top\pdiff{F(\bw_k,
  \bx_i)}{\bw}\right)\right) \nonumber\\
  \label{eq:ModelFunction}
  & + \frac{1}{2n}\sum_i \log \left(1 + \exp\left(F(\bw_k, G_\theta(\bz_i)) +
  \bs^\top\pdiff{F(\bw_k, G_\theta(\bz_i))}{\bw}\right)\right).
\end{align}
This approximation keeps the nonlinearities of the surrogate loss function intact,
therefore we expect it to be more accurate than
linearization of the whole cost function $f(\theta,\bw)$. When $F$ is already
linear in $\bw$, 
linearization of the
score function introduces no approximation error, and the formulation can be
naturally reduced to the discussion presented in \secref{sec:LD};
non-negligible errors are introduced when linearizing the whole cost function $f$ in
this case.

For general non-linear discriminators, however, no analytic solution can be
computed for the program given in \equref{eq:ModelPrimal} when using this model.
Nonetheless, the model function fulfills $m_{k,\theta}(0) = f(\bw_k,\theta)$
and it is convex in $\bs$. Exploiting this convexity, we can derive the dual
for this trust-region optimization problem as presented in the following claim.
The proof is included in the supplementary material.
\begin{claim}
\label{clm:dualModel}
The dual program to $\min_\bs m_{k,\theta}(\bs)$ s.t.~$\frac{1}{2}\|\bs\|_2^2
\leq \Delta_k$ with model function as in \equref{eq:ModelFunction} is:
\begin{eqnarray*}
  \max_{\lambda}
  &&\frac{C}{2}\|\bw_k\|_2^2 -
  \frac{1}{2(C+\lambda_T)}\left\|-C\bw_k + \sum_i \lambda_{\bx_i}\pdiff{F(\bw_k,
  \bx_i)}{\bw} - \sum_i \lambda_{\bz_i}\pdiff{F(\bw_k,
  G_\theta(\bz_i))}{\bw}\right\|_2^2 \\
  &&+ \frac{1}{2n}\sum_i H(2n\lambda_{\bx_i}) + \frac{1}{2n}\sum_i
H(2n\lambda_{\bz_i}) - \sum_i \lambda_{\bx_i} F_{\bx_i} + \sum_i \lambda_{\bz_i}
F_{\bz_i} -\lambda_T \Delta_k \\
\suchthat && \lambda_T \geq 0 \quad\quad \forall i,\quad 0\leq\lambda_{\bx_i}\leq
\frac{1}{2n}, \quad 0\leq \lambda_{\bz_i}\leq \frac{1}{2n}.
\end{eqnarray*}
The optimal $\bs^*$ to the original problem can
be expressed through optimal $\lambda_T^*, \lambda_{\bx_i}^*, \lambda_{\bz_i}^*$ as
\begin{equation*}
  \bs^*= \frac{1}{C + \lambda_T^*}\left(\sum_i \lambda_{\bx_i}^*\pdiff{F(\bw_k,
      \bx_i)}{\bw}
    - \sum_i \lambda_{\bz_i}^*\pdiff{F(\bw_k, \bz_i)}{\bw}\right) - \frac{C}{C +
    \lambda_T^*}\bw_k
\end{equation*}
\end{claim}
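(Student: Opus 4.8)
The plan is to carry out the dualization in two nested stages, since the problem is convex in $\bs$ (as noted in the excerpt) and carries two sources of duality: the trust-region constraint and the two logistic terms. For the outer stage I would first dualize the constraint $\frac{1}{2}\|\bs\|_2^2\le\Delta_k$. Because $\bs=0$ is strictly feasible whenever $\Delta_k>0$, Slater's condition holds and strong duality gives that the constrained primal equals $\max_{\lambda_T\ge0}\min_\bs L(\bs,\lambda_T)$, where $L(\bs,\lambda_T)=m_{k,\theta}(\bs)+\lambda_T\big(\tfrac12\|\bs\|_2^2-\Delta_k\big)$ and $\lambda_T\ge0$ is the multiplier. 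To keep the bookkeeping compact I would set $\mathbf{a}_i:=\pdiff{F(\bw_k,\bx_i)}{\bw}$, $\mathbf{b}_i:=\pdiff{F(\bw_k,G_\theta(\bz_i))}{\bw}$, $F_{\bx_i}:=F(\bw_k,\bx_i)$ and $F_{\bz_i}:=F(\bw_k,G_\theta(\bz_i))$, so that the logistic arguments in \equref{eq:ModelFunction} are the affine functions $-F_{\bx_i}-\mathbf{a}_i^\top\bs$ and $F_{\bz_i}+\mathbf{b}_i^\top\bs$.

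The heart of the argument is the inner stage: dualizing the two logistic terms through the Fenchel conjugate of $\ell(u)=\log(1+e^u)$. Setting the derivative of $uv-\ell(u)$ to zero gives $\ell^*(v)=v\log v+(1-v)\log(1-v)=-H(v)$ for $v\in[0,1]$ (and $+\infty$ otherwise), hence the variational identity $\log(1+e^u)=\max_{v\in[0,1]}\big(uv+H(v)\big)$. Applying this to each term introduces per-sample variables $\mu_i,\nu_i\in[0,1]$ and rewrites $\min_\bs L$ as a min-max. Since $L$ is convex in $\bs$, concave (linear plus the concave entropy) in $(\mu,\nu)$, and $(\mu,\nu)$ ranges over the compact convex cube $[0,1]^{2n}$, Sion's minimax theorem justifies swapping the inner $\min_\bs$ with $\max_{\mu,\nu}$; the positive-definite quadratic in $L$ (with coefficient $C+\lambda_T\ge C>0$) makes the inner infimum coercive and attained. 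The reparametrization $\mu_i=2n\lambda_{\bx_i}$, $\nu_i=2n\lambda_{\bz_i}$ then yields the box constraints $0\le\lambda_{\bx_i},\lambda_{\bz_i}\le\frac1{2n}$ and, after pulling the factor $\frac1{2n}$ through, exactly the entropy terms $\frac1{2n}\sum_iH(2n\lambda_{\bx_i})$, $\frac1{2n}\sum_iH(2n\lambda_{\bz_i})$ together with the linear terms $-\sum_i\lambda_{\bx_i}F_{\bx_i}+\sum_i\lambda_{\bz_i}F_{\bz_i}$ of the claimed dual.

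With the logistic terms now linearized in $\bs$, the surviving $\bs$-dependence is the strictly convex quadratic $\frac{C}{2}\|\bw_k\|_2^2+\frac{C+\lambda_T}{2}\|\bs\|_2^2+(C\bw_k-\mathbf{d})^\top\bs$, where $\mathbf{d}:=\sum_i\lambda_{\bx_i}\mathbf{a}_i-\sum_i\lambda_{\bz_i}\mathbf{b}_i$; the linear coefficient arises precisely because the $\bx$-term contributes $-\sum_i\lambda_{\bx_i}\mathbf{a}_i^\top\bs$ and the $\bz$-term $+\sum_i\lambda_{\bz_i}\mathbf{b}_i^\top\bs$. Setting the gradient $(C+\lambda_T)\bs+(C\bw_k-\mathbf{d})=0$ gives the claimed optimizer $\bs^*=\frac{1}{C+\lambda_T}\mathbf{d}-\frac{C}{C+\lambda_T}\bw_k$, and back-substitution produces the value $\frac{C}{2}\|\bw_k\|_2^2-\frac{1}{2(C+\lambda_T)}\|{-}C\bw_k+\mathbf{d}\|_2^2$, matching the quadratic part of the dual since $\|C\bw_k-\mathbf{d}\|_2=\|{-}C\bw_k+\mathbf{d}\|_2$. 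Collecting every surviving term and merging the maximizations over $\lambda_T,\lambda_{\bx},\lambda_{\bz}$ into a single $\max_\lambda$ finishes the derivation.

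I expect the routine algebra (expanding $\frac{C}{2}\|\bw_k+\bs\|_2^2$ and tracking the $2n$ scaling) to be unproblematic. The two genuine points requiring care are the two interchanges of optimization: strong duality for the trust-region multiplier, handled by Slater's condition, and the $\min_\bs$/$\max_{\mu,\nu}$ swap, handled by Sion's theorem via compactness of $[0,1]^{2n}$. I would also verify carefully that the conjugate $\ell^*$ and the sign convention of $H$ match those in the statement, so that the entropy terms appear with the stated sign; this is the step where a stray sign would most easily creep in.
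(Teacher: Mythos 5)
Your derivation is correct and arrives at exactly the claimed dual, but it is organized differently from the paper's proof. The paper proceeds in one shot: it introduces auxiliary variables $\xi_{\bx_i} = \bs^\top\nabla F_{\bx_i}$, $\xi_{\bz_i} = -\bs^\top\nabla F_{\bz_i}$ with equality constraints, forms a single Lagrangian containing all multipliers ($\lambda_{\bx_i}$, $\lambda_{\bz_i}$ for the equalities and $\lambda_T$ for the trust region), sets the derivatives with respect to $\bs$ and $\xi$ to zero, and back-substitutes; there the entropy terms and the box constraints $0\le\lambda_{\bx_i},\lambda_{\bz_i}\le\frac{1}{2n}$ emerge implicitly from solving the stationarity equations (the sigmoid forces the multipliers into that range). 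You instead decompose into two nested stages — dualize the trust-region constraint via Slater, then replace each logistic term by its Fenchel representation $\log(1+e^u)=\max_{v\in[0,1]}(uv+H(v))$ and swap $\min_\bs$ with $\max_{\mu,\nu}$ by Sion's theorem — so the entropy and the box constraints are built in from the start as the conjugate's domain, and the only remaining primal computation is an unconstrained strictly convex quadratic in $\bs$. Mathematically the same conjugate computation underlies both (eliminating $\xi$ by stationarity in the paper is precisely computing that conjugate), but your version buys explicit justification of the min--max exchanges, which the paper's Lagrangian/KKT manipulation takes for granted without verifying a constraint qualification, while the paper's version is more mechanical and requires no appeal to minimax theorems. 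Your closing algebra (the stationary point $\bs^*$ and the value $\frac{C}{2}\|\bw_k\|_2^2-\frac{1}{2(C+\lambda_T)}\|{-}C\bw_k+\mathbf{d}\|_2^2$) matches the paper's after its longer expansion, and your caution about the sign convention of $H$ is exactly right — that is where the two conventions ($\ell^*=-H$ versus the paper's stationarity route) must be reconciled.
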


Combining the dual formulation with the maximization of the
generator parameters $\theta$ results in a maximization as opposed to a
search for a saddle point.
However, unlike the linear case, it is not possible to design an
algorithm that is guaranteed to monotonically increase the cost function
$f(\theta,\bw)$. The culprit is step 3 of the algorithm outlined in
\figref{fig:AlgoOutline}, which adapts the model $m_{k,\theta}(\bs)$ in every
iteration.
 

Intuitively, the program illustrated in \clmref{clm:dualModel} aims at choosing
dual variables $\lambda_{\bx_i}$, $\lambda_{\bz_i}$ such that the weighted means of
derivatives as well as scores match. Note that this program searches for
a direction $\bs$ as opposed to searching for the weights $\bw$, hence the term
$-C\bw_k$ inside the squared norm.

In practice, we use Ipopt~\cite{WaechterMP2006} to solve the dual problem.
The form of this dual is  more ill-conditioned than the linear case. 
The solution found by Ipopt sometimes contains errors, however, we found the
errors to be generally tolerable and not to affect the performance of our models.

\section{Experiments}\label{sec_exp}

\begin{figure}[t]
\centering
\begin{tabular}{cc}
  \includegraphics[width=0.45\textwidth]{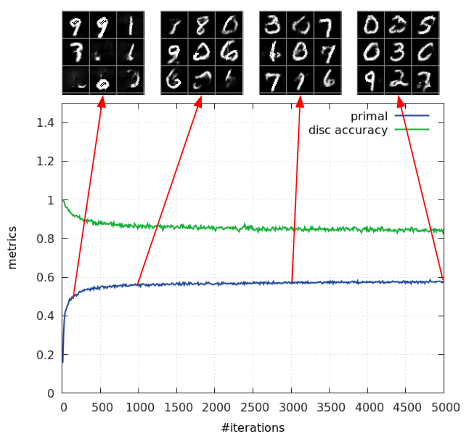} &
\includegraphics[width=0.45\textwidth]{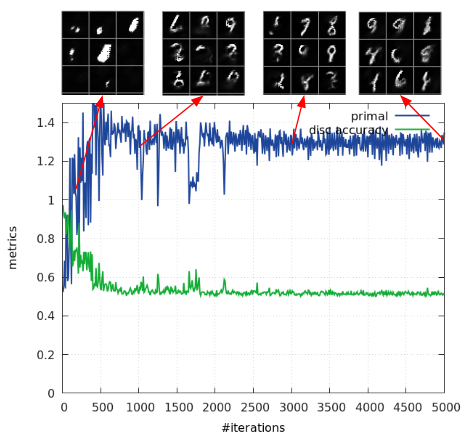} \\
\includegraphics[width=0.25\textwidth]{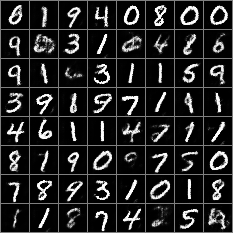} &
\includegraphics[width=0.25\textwidth]{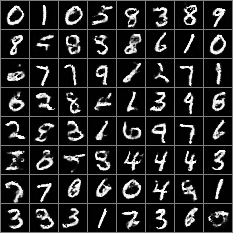}
\end{tabular}
\caption{
We show the learning curves and samples from two models of the same
architecture, one optimized in dual space (left), and one in the primal space (\ie, typical
GAN) up to 5000 iterations.  Samples are shown at different points during
training, as well as at the very end (second row).
Despite having similar sample qualities in the end, they demonstrate drastically
different training behavior.  In the typical GAN setup, loss oscillates and has 
no clear trend, whereas in the dual setup, loss monotonically increases and shows
much smaller oscillation. Sample quality is nicely correlated with the dual objective
during training.}
\label{fig:lin-mnist}
\end{figure}

\begin{figure}[t]
\centering
\begin{tabular}{c}
\includegraphics[width=0.95\textwidth]{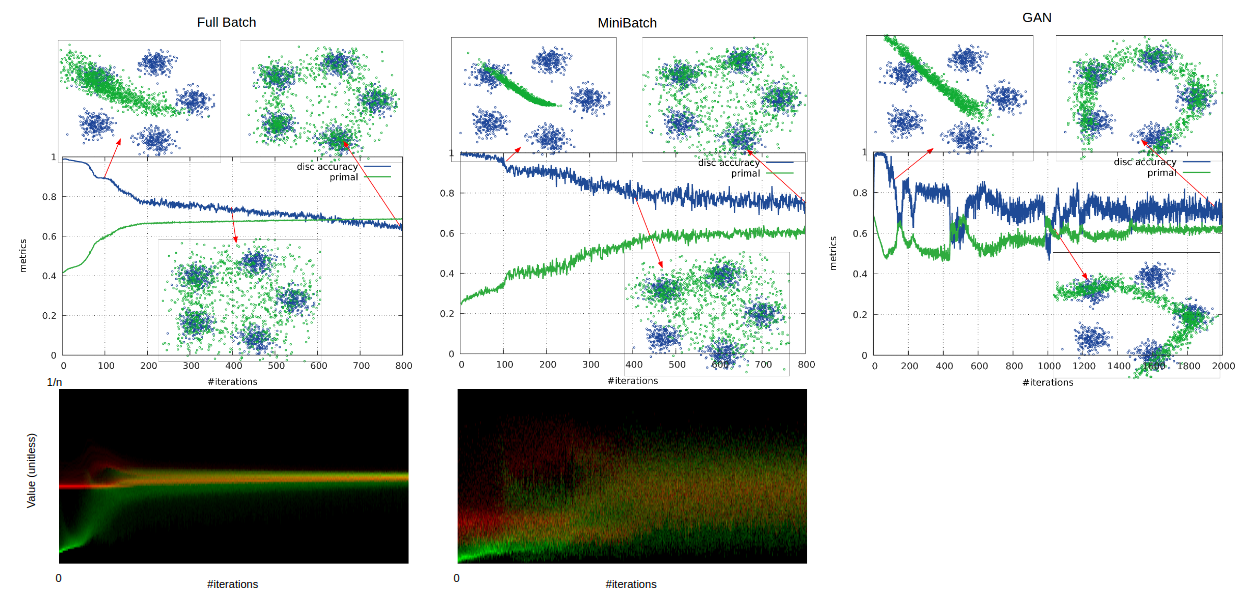}
\end{tabular}
\caption{
  Training GANs with linear discriminators on the simple 5-Gaussians dataset.
  Here we are showing typical runs with the compared methods (not
  cherry-picked).
Top: training curves and samples
from a single experiment: left - dual with full batch, middle - dual
with minibatch, right - standard GAN with minibatch.
  The real data from this dataset
are drawn in blue, generated samples in green.
Below: distribution of
$\lambda$'s during training for the two dual GAN experiments, as a histogram
at each x-value (iteration) where intensity depicts frequency for values
ranging from 0 to 1 (red are data, and green are samples).
}
\label{fig:lin-toy}
\end{figure}

In this section, we empirically study the proposed dual GAN algorithms.  
In particular, we show the stable and monotonic training for linear
discriminators and study its properties. For nonlinear GANs we show good quality
samples and compare it with standard GAN training methods.  Experiments are done
on three datasets: a  2D dataset composed of 5 2D Gaussians (5-Gaussians), MNIST
\cite{LeCunIEEE1998}, and CIFAR-10 \cite{krizhevsky2009learning}.
Overall the results
show that our proposed approaches work across a range of problems and provide
good alternatives to the standard GAN training method.

\subsection{Dual GAN with linear discriminator}
\label{sec:expLinGan}

We explore the dual GAN with linear discriminator on the synthetic 2D dataset
generated by sampling points from a mixture of 5 2D Gaussians,
as well as the MNIST dataset.
Through these experiments we show that (1) with the proposed dual GAN algorithm,
training is very stable; (2) the dual variables $\lambda$ can be used as an extra
informative signal for monitoring the training process; (3) features matter,
and we can train good generative models even with linear discriminators when we
have good features.  In all experiments, we compare our
proposed dual GAN with standard GAN, for training the same generator and
discriminator models.

The discussion of linear discriminators presented in \secref{sec:LD} works with any
feature representation $\phi(\bx)$ in place of $\bx$ as long as $\phi$ is differentiable to allow
gradients flow through it.
For the simple 5-Gaussian dataset, we use RBF features based on 100 sample training points.
For the MNIST dataset, we use a convolutional
neural net, and concatenate the hidden activations on all layers as the features. 

The dual GAN formulation has a single hyper-parameter $C$, but we found the
algorithm not to be sensitive to it, and set it to 0.0001 in all experiments.
We used Adam \cite{KingmaICLR2015} with fixed
learning rate and momentum to optimize the generator.  Additional experimental
details and results are included in the supplementary material.


\textbf{Stable Training:} The main results illustrating stable training are provided in
\figref{fig:lin-mnist} and \ref{fig:lin-toy}, where we show the learning
curves as well as model samples at different points during training.  Both the
dual GAN and the standard GAN use minibatches of the same size, and for the synthetic 
dataset we did an extra experiment doing full-batch training.  From these curves
we can see the stable monotonic increase of the dual objective, contrasted with
standard GAN's spiky training curves.  On the synthetic data, we see that
increasing the minibatch size leads to significantly improved stability.
In the
supplementary material we include an extra experiment to quantify the stability
of the proposed method on the synthetic dataset.

\textbf{Sensitivity to Hyperparameters:} Sensitivity to hyperparameters is
another important aspect of training stability. Successful GAN training typically requires
carefully tuned hyperparameters, making it difficult for non-experts to adopt
these generative models.  In an attempt to quantify this
sensitivity, we investigated the robustness of the proposed method to
hyperparameter choice,
and empirically showed the
proposed method was less sensitive to the choice of hyperparameters. For both the
5-Gaussians and MNIST datasets, we randomly sampled 100 hyperparameter settings
from ranges specified in Table \ref{tab:hyperp-ranges}, and compared learning using both
the proposed dual GAN and the standard GAN.  On the 5-Gaussians dataset, we
evaluated the performance of the models by how well
the model samples covered the 5 modes.  We defined successfully covering a mode
as having $>100$ out of $1000$ samples falling within a distance of 3 standard
deviations to the center of the Gaussian. Our dual linear GAN
succeeded in 49\% of the experiments, and standard GAN succeeded in only 32\%,
demonstrating our method was significantly easier to train and tune.  On MNIST,
the mean Inception scores were 2.83, 1.99 for the proposed method and GAN
training respectively. A more detailed breakdown of mode coverage and Inception
score can be found in Figure \ref{fig:hyperp}.

\begin{table}[t]
\setlength\tabcolsep{1.5pt} 
\footnotesize
  \centering
  {\footnotesize
  \begin{tabular}{r|c|c|c|c|c|c|c}
    \toprule
    Dataset & mini-batch size & generator & generator & $C$ & discriminator & generator & max \\
     &  & learnrate & momentum & & learnrate* & architecture& iterations \\
    \hline
    5-Gaussians& randint[20,200] & enr([0,10])&rand[.1,.9] & enr([0,6]) & enr([0,10]) & fc-small& randint[400,2000] \\
    &&&&&& fc-large&\\
    \hline
    MNIST& randint[20,200] & enr([0,10])&rand[.1,.9] & enr([0,6]) & enr([0,10]) & fc-small& 20000 \\
    &&&&&& fc-large&\\
    &&&&&& dcgan&\\
    &&&&&& dcgan-no-bn&\\
    \bottomrule
  \end{tabular}}
  \caption{Ranges of hyperparameters for sensitivity experiment. randint[a,b]
  means samples were drawn from uniformly distributed integers in the closed
interval of [a,b], similarly rand[a,b] for real numbers. enr([a,b]) is
shorthand for exp(-randint[a,b]), which was used for hyperparameters commonly
explored in log-scale. For generator architectures, for the 5-Gaussians dataset
we tried 2 3-layer fully-connected networks, with 20 and 40 hidden units.  For
MNIST, we tried 2 3-layer fully-connected networks, with 256 and 1024 hidden
units, and a DCGAN-like architecture with and without batch normalization.}
  \label{tab:hyperp-ranges}
\end{table}

\begin{figure*}[t]
  \centering
  \begin{tabular}{cc}
    \includegraphics[width=0.4\textwidth]{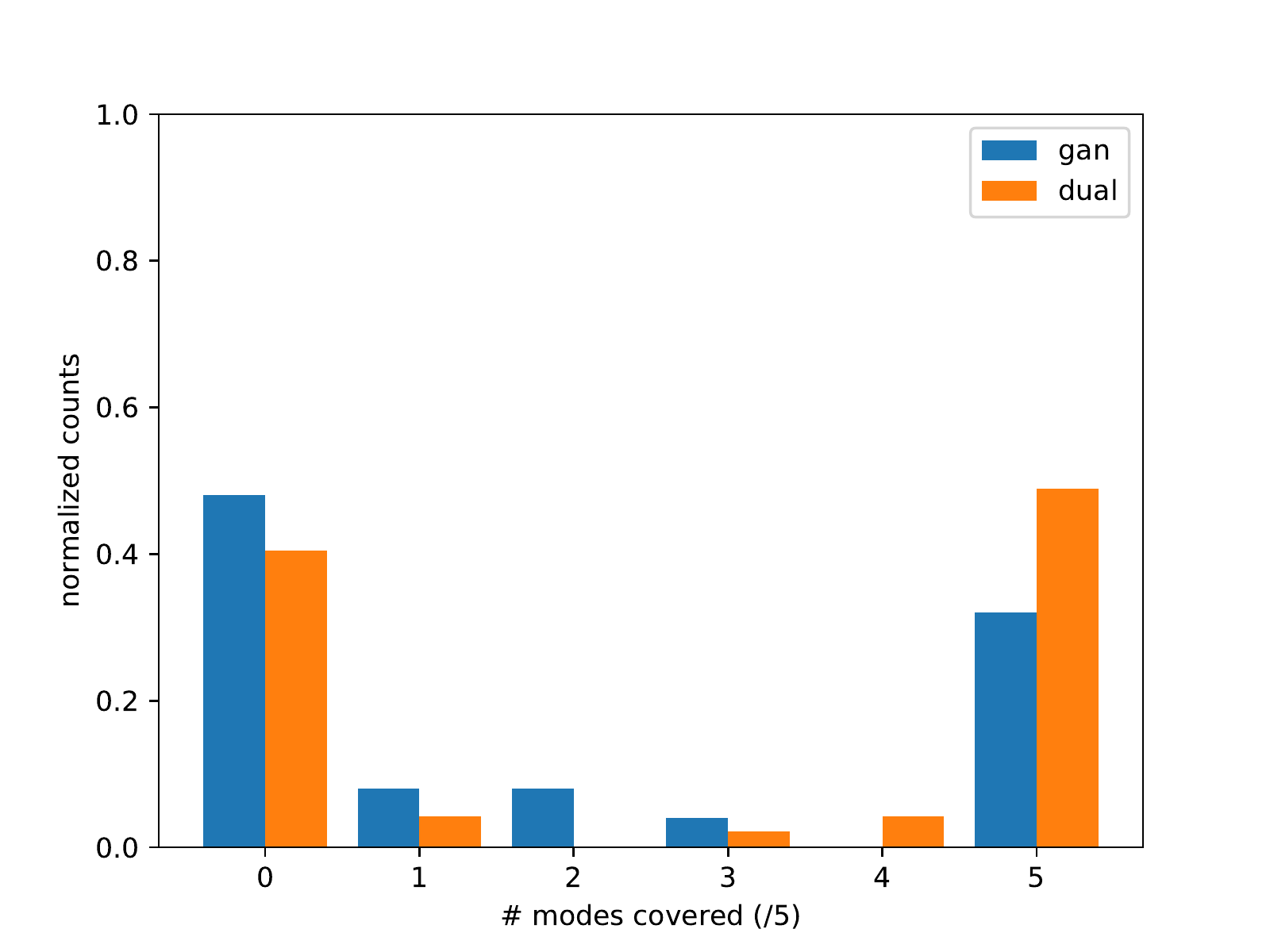} &
    \includegraphics[width=0.4\textwidth]{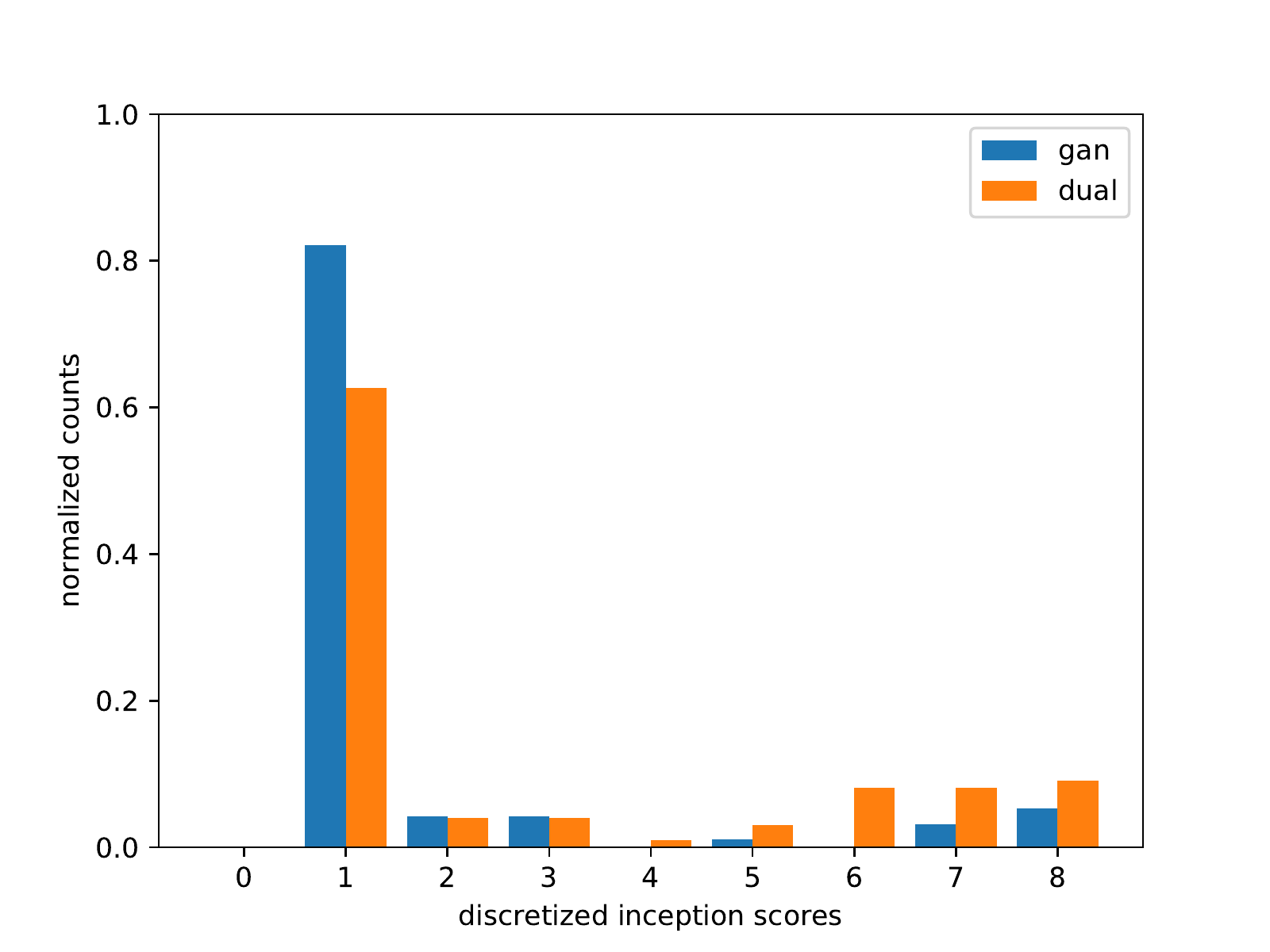}\\
    5-Gaussians & MNIST \\
  \end{tabular}
  \caption{Results for hyperparameter sensitivity experiment. For 5-Gaussians
  dataset, the x-axis represents the number of modes covered. For MNIST, the
x-axis represents discretized Inception score. Overall, the proposed dual GAN
results concentrate significantly more mass on the right side, demonstrating its
better robustness to hyperparameters than standard GANs.}
  \label{fig:hyperp}
\end{figure*}



\textbf{Distribution of $\lambda$ During Training:} The dual formulation
allows us to monitor the training process through a unique perspective by
monitoring the dual variables $\lambda$.   \figref{fig:lin-toy} shows the evolution of the
distribution of $\lambda$ during training for the synthetic 2D dataset.  At the
begining of training the $\lambda$'s are on the low side as the generator is not
good and $\lambda$'s are encouraged to be small to minimize the moment matching
cost.  As the generator improves, more attention is devoted to the entropy term in the
dual objective, and the $\lambda$'s start to converge to the value of $1/4n$.

\textbf{Comparison of Different Features:} The qualitative differences of the learned models
with different features can be observed in  \figref{fig:lin-mnist-feature}.  In general, the
more information the features carry about the data, the better the learned
generative models are. 
On MNIST, even with random features and linear discriminators we
can learn reasonably good generative models.
On the other hand, these results also indicate that if the features are bad
then it is hard to learn good models.
This leads us to the nonlinear
discriminators presented below, where the discriminator features are learned
together with the last layer, which may be necessary for more complicated problems
domains where features are potentially difficult to engineer.



\begin{figure*}[t]
  \centering
  \begin{tabular}{ccccccc}
    \rotatebox{90}{Trained}&
    \includegraphics[trim={8cm 0 0 0},clip,width=0.13\textwidth]{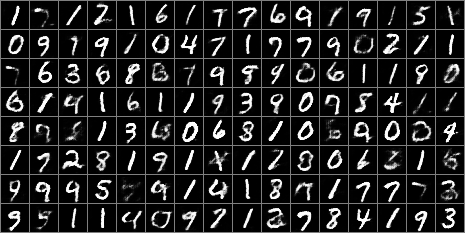} &
    \includegraphics[trim={8cm 0 0 0},clip,width=0.13\textwidth]{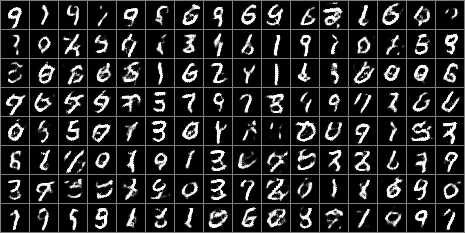} &
    \includegraphics[trim={8cm 0 0 0},clip,width=0.13\textwidth]{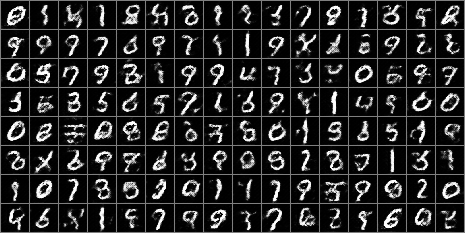} &
    \includegraphics[trim={8cm 0 0 0},clip,width=0.13\textwidth]{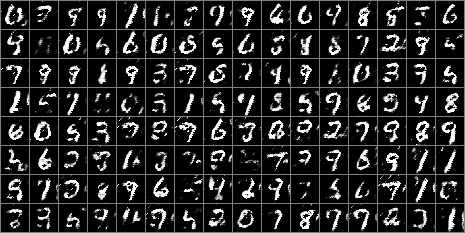} &
    \includegraphics[trim={8cm 0 0 0},clip,width=0.13\textwidth]{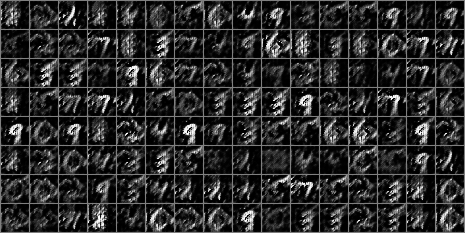}&
    \includegraphics[trim={8cm 0 0 0},clip,width=0.13\textwidth]{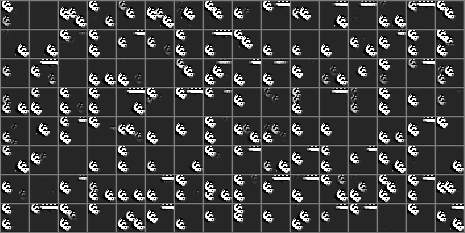}\\

    \rotatebox{90}{Random}&
    \includegraphics[trim={8cm 0 0 0},clip,width=0.13\textwidth]{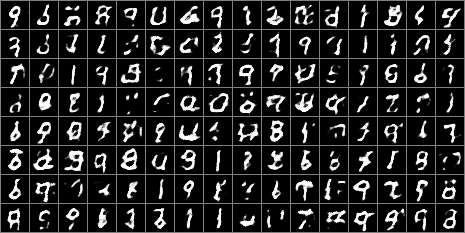} &
    \includegraphics[trim={8cm 0 0 0},clip,width=0.13\textwidth]{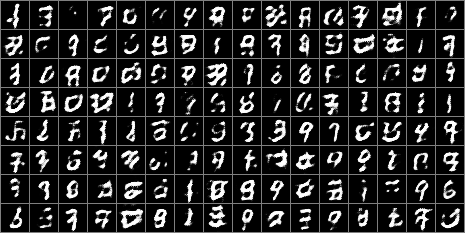} &
    \includegraphics[trim={8cm 0 0 0},clip,width=0.13\textwidth]{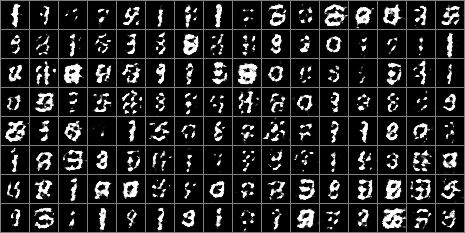} &
    \includegraphics[trim={8cm 0 0 0},clip,width=0.13\textwidth]{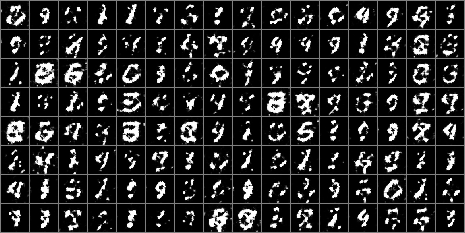} &
    \includegraphics[trim={8cm 0 0 0},clip,width=0.13\textwidth]{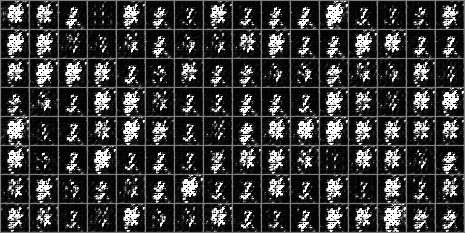}&
    \includegraphics[trim={8cm 0 0 0},clip,width=0.13\textwidth]{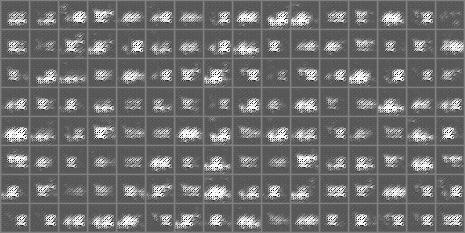}\\
    & Layer: All & Conv1 & Conv2 &Conv3 & Fc4 & Fc5\\
  \end{tabular}
  \caption{Samples from dual linear GAN using pretrained and random features
  on MNIST. Each column shows a different features, utilizing all layers in
  a convnet and then successive single layers in the network.}
  \label{fig:lin-mnist-feature}
\end{figure*}

\subsection{Dual GAN with non-linear discriminator}
\label{sec:expNonLinGan}
Next we assess the applicability of our proposed technique for non-linear
discriminators, and focus on training models on 
MNIST and CIFAR-10.

As discussed in \secref{sec:NLD}, when the discriminator is non-linear, we can
only approximate the discriminator locally. 
Therefore we do not have monotonic convergence guarantees. 
However, through better approximation and optimization of the
discriminator we may expect the proposed dual GAN to work better than standard
gradient based GAN training in some cases.
Since GAN training is sensitive to hyperparameters, to make the comparison fair,
we tuned the parameters for both the standard GANs and our approaches
extensively and compare the best results for each.

\figref{fig:nonlin-mnist} and \ref{fig:nonlin-cifar} show the samples 
generated by models learned using different
approaches.  Visually samples of our proposed approaches are on par with the
standard GANs.  As an extra quantitative metric for performance, we computed the
Inception Score~\cite{SalimansARXIV2016} for each of them on CIFAR-10 in Table~\ref{tab:inception}. 
The Inception Score is a surrogate metric which highly depends on the network architecture. Therefore we computed the score  using our own classifier and the one proposed in~\cite{SalimansARXIV2016}. As can be seen in Table~\ref{tab:inception}, both score and cost
linearization are competitive with standard GANs.
%
From the training curves we can also see that
score linearization does the best in terms of approximating the objective, and
both score linearization and cost linearization oscillate less than standard
GANs.

\begin{table}[t]
  \centering
  {
  \begin{tabular}{r|cccc}
    \toprule
    Score Type & GAN & Score Lin & Cost Lin & Real Data \\
    \hline
    Inception (end)           & 5.61$\pm$0.09   & 5.40$\pm$0.12   & 5.43$\pm$0.10  & 10.72 $\pm$ 0.38 \\
    Internal classifier (end) & 3.85$\pm$0.08   & 3.52$\pm$0.09   & 4.42$\pm$0.09  & 8.03  $\pm$ 0.07  \\
    \hline
    Inception (avg)           & 5.59$\pm$0.38   & 5.44$\pm$0.08   & 5.16$\pm$0.37  & - \\
    Internal classifier (avg) & 3.64$\pm$0.47   & 3.70$\pm$0.27   & 4.04$\pm$0.37  & -  \\
    \bottomrule
  \end{tabular}}
  \caption{Inception Score \cite{SalimansARXIV2016} for different GAN training
    methods.  Since the score depends on the classifier, we used  code from \cite{SalimansARXIV2016} as well as our own
    small convnet CIFAR-10 classifier for evaluation (achieves 83\%
    accuracy). All scores are computed using 10,000 samples. The top pair are 
    scores on the final models. GANs are known to be unstable, and results are 
    sometimes cherry-picked. So, the bottom pair are scores averaged across models sampled from 
    different iterations of training after it stopped improving.
    }
  \label{tab:inception}
\end{table}

\begin{figure*}[t]
  \centering
  \begin{tabular}{ccc}
    \includegraphics[width=0.3\textwidth]{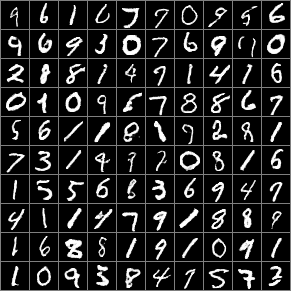} &
    \includegraphics[width=0.3\textwidth]{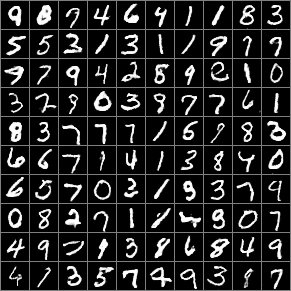} &
    \includegraphics[width=0.3\textwidth]{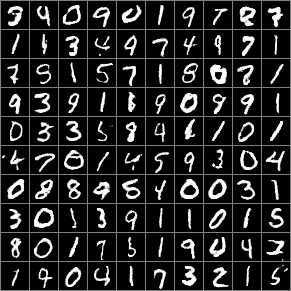}\\
    \includegraphics[width=0.3\textwidth]{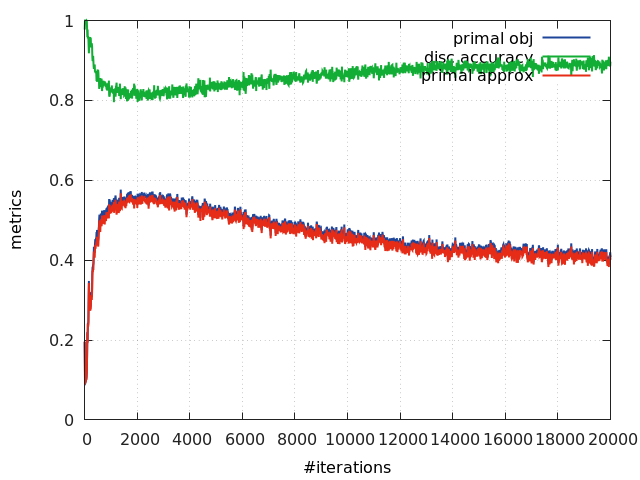} &
    \includegraphics[width=0.3\textwidth]{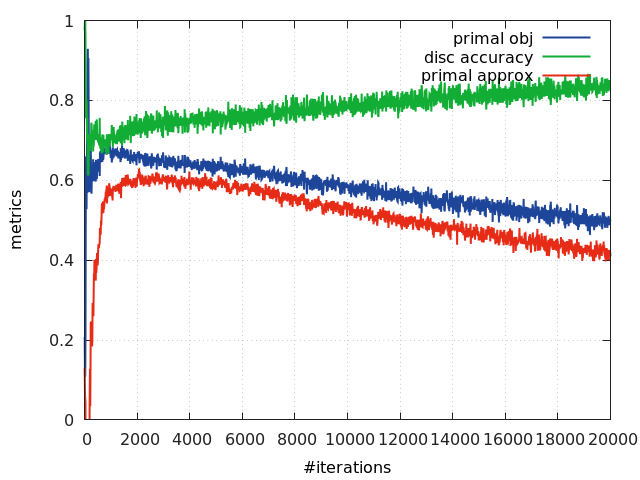} &
    \includegraphics[width=0.3\textwidth]{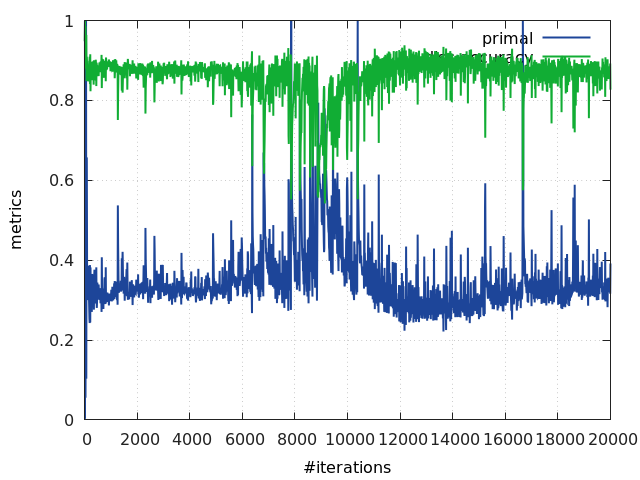}\\
    Score Linearization & Cost Linearization & GAN\\
  \end{tabular}
  \caption{Nonlinear discriminator experiments on MNIST, and their training
  curves, showing the primal objective, the approximation, and the discriminator
accuracy.
  Here we are showing typical runs with the compared methods (not
cherry-picked).}
  \label{fig:nonlin-mnist}
\end{figure*}

\begin{figure*}[t]
  \centering
  \begin{tabular}{ccc}
    \includegraphics[width=0.3\textwidth]{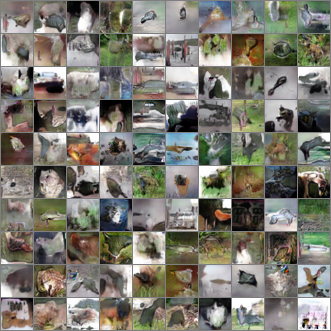} &
    \includegraphics[width=0.3\textwidth]{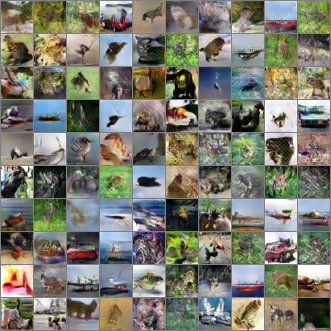} &
    \includegraphics[width=0.3\textwidth]{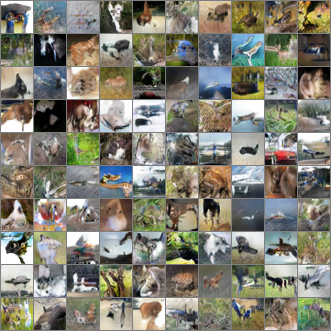}\\
    Score Linearization & Cost Linearization & GAN\\
  \end{tabular}
  \caption{Nonlinear discriminator experiments on CIFAR-10, learning curves and 
  samples organized by class are provided in the supplementary material.}
  \label{fig:nonlin-cifar}
\end{figure*}
\vspace{-0.2cm}
\section{Related Work}\label{sec_relate_work}
\vspace{-0.1cm}
A thorough review of the research devoted to generative modeling is beyond the scope
of this paper.  In this section we focus on GANs \cite{GoodfellowARXIV2014} and
review the most related work that has not been discussed throughout the paper. 


Our dual formulation reveals a close connection to moment-matching 
objectives widely seen in many other models.  MMD \cite{GrettonJMLR2012} is one
such related objective, and has been used in deep generative models in
\cite{LiARXIV2015, dziugaite2015training}.  \cite{SalimansARXIV2016}
proposed a range of techniques to improve GAN training, including the usage of
feature matching.  Similar techniques are also common in style transfer
\cite{gatys2015neural}.  In addition to these, moment-matching objectives are very
common for exponential family models \cite{wainwright2008graphical}.  Common to all these works is the use of 
fixed moments.  The Wasserstein objective proposed for GAN training in
\cite{ArjovskyARXIV2017} can also be thought of as a form of moment matching, where the features are
part of the discriminator and they are adaptive.  The main difference between
 our dual GAN with linear discriminators and other forms of adaptive moment
matching is that we adapt the weighting of features by optimizing non-parametric dual
parameters, while other works mostly adopt a parametric model to adapt features.



Duality has also been studied to understand and improve GAN training.
\cite{NowozinNIPS2016} pioneered  work that uses duality to derive new GAN
training objectives from other divergences.  \cite{ArjovskyARXIV2017}
also used duality to derive a practical objective for training GANs from other
distance metrics.  Compared to previous work, instead of coming up with new
objectives, we instead used duality on the original GAN objective and aim to better
optimize the discriminator.



Beyond what has already been discussed, there has been a range of other
techniques developed to improve or extend GAN training, \eg, 
\cite{ImARXIV2016, HuangARXIV2016, ZhangARXIV2016, ChenARXIV2016, ZhaoICLR2017,
LondonNIPSWS2016} just to name a few.


\vspace{-0.3cm}
\section{Conclusion}
\label{sec_conclusion}
\vspace{-0.2cm}
To conclude, we introduced `Dualing GANs,' a framework which considers duality
based formulations for the duel between the discriminator and the generator.
Using the dual formulation provides opportunities to train the discriminator
better.  This helps remove the instability in training for linear
discriminators, and we also adapted this framework to non-linear discriminators.
The dual formulation also provides  connections to other techniques.
In particular, we discussed a close link to  moment matching
techniques, and showed that the cost function linearization for non-linear
discriminators recovers the original gradient direction in standard GANs.
We hope that our results spur further research in this direction to obtain a
better understanding of the GAN objective and its intricacies.

\bibliography{LinGAN}

\begin{thebibliography}{10}

\bibitem{ArjovskyARXIV2017}
M.~Arjovsky, S.~Chintala, and L.~Bottou.
\newblock {Wasserstein GAN}.
\newblock In {\em https://arxiv.org/abs/1701.07875}, 2017.

\bibitem{ChenARXIV2016}
X.~Chen, Y.~Duan, R.~Houthooft, J.~Schulman, I.~Sutskever, and P.~Abbeel.
\newblock {InfoGAN: Interpretable Representation Learning by Information
  Maximizing Generative Adversarial Nets}.
\newblock In {\em https://arxiv.org/pdf/1606.03657v1.pdf}, 2016.

\bibitem{dziugaite2015training}
Gintare~Karolina Dziugaite, Daniel~M Roy, and Zoubin Ghahramani.
\newblock Training generative neural networks via maximum mean discrepancy
  optimization.
\newblock {\em arXiv preprint arXiv:1505.03906}, 2015.

\bibitem{gatys2015neural}
Leon~A Gatys, Alexander~S Ecker, and Matthias Bethge.
\newblock A neural algorithm of artistic style.
\newblock {\em arXiv preprint arXiv:1508.06576}, 2015.

\bibitem{GoodfellowARXIV2014}
I.~J. Goodfellow, J.~Pouget-Abadie, M.~Mirza, B.~Xu, D.~Warde-Farley, S.~Ozair,
  A.~Courville, and Yoshua Bengio.
\newblock {Generative Adversarial Networks}.
\newblock In {\em https://arxiv.org/abs/1406.2661}, 2014.

\bibitem{GrettonJMLR2012}
A.~Gretton, K.~M. Borgwardt, M.~J. Rasch, B.~Sch\"{o}lkopf, and A.~Smola.
\newblock {A Kernel Two-Sample Test}.
\newblock {\em JMLR}, 2012.

\bibitem{HuangARXIV2016}
X.~Huang, Y.~Li, O.~Poursaeed, J.~Hopcroft, and S.~Belongie.
\newblock {Stacked Generative Adversarial Networks}.
\newblock In {\em https://arxiv.org/abs/1612.04357}, 2016.

\bibitem{ImARXIV2016}
D.~J. Im, C.~D. Kim, H.~Jiang, and R.~Memisevic.
\newblock {Generating images with recurrent adversarial networks}.
\newblock In {\em https://arxiv.org/abs/1602.05110}, 2016.

\bibitem{KingmaICLR2015}
D.~P. Kingma and J.~Ba.
\newblock {Adam: A Method for Stochastic Optimization}.
\newblock In {\em Proc. ICLR}, 2015.

\bibitem{KingmaARXIV2013}
D.~P. Kingma and M.~Welling.
\newblock {Auto-Encoding Variational Bayes}.
\newblock In {\em https://arxiv.org/abs/1312.6114}, 2013.

\bibitem{krizhevsky2009learning}
Alex Krizhevsky and Geoffrey Hinton.
\newblock Learning multiple layers of features from tiny images, 2009.

\bibitem{LeCunIEEE1998}
Y.~LeCun, L.~Bottou, Y.~Bengio, and P.~Haffner.
\newblock {Gradient-based learning applied to document recognition}.
\newblock {\em IEEE}, 1998.

\bibitem{LiARXIV2015}
Y.~Li, K.~Swersky, and R.~Zemel.
\newblock {Generative Moment Matching Networks}.
\newblock In {\em abs/1502.02761}, 2015.

\bibitem{LondonNIPSWS2016}
B.~London and A.~G. Schwing.
\newblock {Generative Adversarial Structured Networks}.
\newblock In {\em Proc. NIPS Workshop on Adversarial Training}, 2016.

\bibitem{MetzARXIV2016}
L.~Metz, B.~Poole, D.~Pfau, and J.~Sohl-Dickstein.
\newblock {Unrolled Generative Adversarial Networks}.
\newblock In {\em https://arxiv.org/abs/1611.02163}, 2016.

\bibitem{NowozinNIPS2016}
S.~Nowozin, B.~Cseke, and R.~Tomioka.
\newblock {f-GAN: Training Generative Neural Samplers using Variational
  Divergence Minimization}.
\newblock In {\em https://arxiv.org/abs/1606.00709}, 2016.

\bibitem{RadfordARXIV2015}
A.~Radford, L.~Metz, and S.~Chintala.
\newblock {Unsupervised Representation Learning with Deep Convolutional
  Generative Adversarial Networks}.
\newblock In {\em https://arxiv.org/abs/1511.06434}, 2015.

\bibitem{SalimansARXIV2016}
T.~Salimans, I.~Goodfellow, W.~Zaremba, V.~Cheung, A.~Radford, and X.~Chen.
\newblock {Improved Techniques for Training GANs}.
\newblock In {\em https://arxiv.org/abs/1606.03498}, 2016.

\bibitem{vandenOordARXIV2016}
A.~van~den Oord, N.~Kalchbrenner, O.~Vinyals, L.~Espeholt, A.~Graves, and
  K.~Kavukcuoglu.
\newblock {Conditional Image Generation with PixelCNN Decoders}.
\newblock In {\em https://arxiv.org/abs/1606.05328}, 2016.

\bibitem{WaechterMP2006}
A.~W\"{a}chter and L.~T. Biegler.
\newblock {On the Implementation of a Primal-Dual Interior Point Filter Line
  Search Algorithm for Large-Scale Nonlinear Programming}.
\newblock {\em Mathematical Programming}, 2006.

\bibitem{wainwright2008graphical}
Martin~J Wainwright, Michael~I Jordan, et~al.
\newblock Graphical models, exponential families, and variational inference.
\newblock {\em Foundations and Trends{\textregistered} in Machine Learning},
  1(1--2):1--305, 2008.

\bibitem{ZhangARXIV2016}
H.~Zhang, T.~Xu, H.~Li, S.~Zhang, X.~Huang, X.~Wang, and D.~Metaxas.
\newblock {StackGAN: Text to Photo-realistic Image Synthesis with Stacked
  Generative Adversarial Networks}.
\newblock In {\em https://arxiv.org/abs/1612.03242}, 2016.

\bibitem{ZhaoICLR2017}
J.~Zhao, M.~Mathieu, and Y.~LeCun.
\newblock {Energy-based Generative Adversarial Network}.
\newblock In {\em Proc. ICLR}, 2017.

\end{thebibliography}
\bibliographystyle{plain}
\clearpage

\appendix
\section{Minibatch objective}

Standard GAN training are motivated from a maximin formulation on an expectation
objective
\begin{equation}
  \max_\theta \min_\bw \expt_{p_d(\bx)}[\log D_\bw(\bx)] + \expt_{p_z(\bz)}[\log(1
  - D_\bw(G_\theta(\bz)))]
\end{equation}
where $p_d$ is the true data distribution, and $p_z$ is the prior distribution
on $\bz$.

In practice, however, a minibatch of data $\bX = \{\bx_1, ..., \bx_n\}$ and
noise $\bZ=\{\bz_1, ..., \bz_n\}$ are sampled each time, and one gradient update
is made to update $\bw$ and $\theta$ each.

In our formulation, in particular the dual GAN with linear discriminators, we
can solve the inner optimization problem over $\bw$ on minibatch samples $\bX$
and $\bZ$ to optimality, $\theta$ is then updated with the optimal $\bw$.  This
effectively makes the optimization problem take the following form
\begin{equation}
  \max_\theta \min_\bw \expt_{\bX, \bZ} [f(\bw, \theta, \bX, \bZ)], \quad
\text{where}\quad f(\bw, \theta, \bX, \bZ) = \frac{1}{n}\sum_i \hat{f}_x(\bw, \theta, \bx_i)
+ \frac{1}{n}\sum_i \hat{f}_z(\bw, \theta, \bz_i)
\end{equation}
where $\hat{f}_x$ and $\hat{f}_z$ are the individual loss functions.  Using this
notation, the original GAN problem can be represented as
\begin{equation}
  \max_\theta \min_\bw \expt_{\bx, \bz}[f(\bw, \theta, \{\bx\}, \{\bz\})] =
  \max_\theta \min_\bw \expt_{\bX, \bZ}[f(\bw, \theta, \bX, \bZ)]
\end{equation}
since, $\bx_i$ and $\bz_i$ are drawn i.i.d. from corresponding distributions.

Let $\bw^* = \argmin_\bw \expt_{\bX, \bZ} [f(\bw, \theta,
\bX, \bZ)]$, we have
\begin{equation}
  \expt_{\bX, \bZ} [\min_\bw f(\bw, \theta, \bX, \bZ)] \le \expt_{\bX, \bZ}[
  f(\bw^*, \theta, \bX, \bZ)] = \min_\bw \expt_{\bX, \bZ}[f(\bw, \theta, \bX,
  \bZ)],
\end{equation}
which means our minibatch algorithm is actually optimizing a lower bound on the
theoretical GAN objective, this introduces a bias that decreases with minibatch
size, but guarantees that the optimization is still valid.

On the other hand, interleaving minibatch training with partial optimization of
$\bw$ (not all the way to optimality) makes the standard GAN training behave
differently, however the exact properties of this process is hard to
characterize and beyond the scope of this paper.

\section{Proof of Claim 1}

\begin{claim}
The dual program to the minimization task
\begin{equation*}
\min_\bw \quad \frac{C}{2} \|\bw\|^2_2 + \frac{1}{2n}\sum_i \log(1 +
\exp(-\bw^\top\bx_i)) + \frac{1}{2n}\sum_i \log (1 + \exp(\bw^\top
G_\theta(\bz_i))).
\end{equation*}
reads as follows:
\begin{align}
  \max_\lambda \quad& g(\theta, \lambda) = -\frac{1}{2C}\left\|\sum_i \lambda_{\bx_i} \bx_i - \sum_i \lambda_{\bz_i}
  G_\theta(\bz_i)\right\|^2 + \frac{1}{2n}\sum_i H(2n\lambda_{\bx_i}) +
  \frac{1}{2n}\sum_i H(2n\lambda_{\bz_i}), \nonumber\\
  \suchthat \quad& \forall i, \quad 0 \le \lambda_{\bx_i} \le \frac{1}{2n}, \quad 0 \le
  \lambda_{\bz_i} \le \frac{1}{2n}.
\end{align}
with binary entropy $H(u) = - u\log u-(1-u)\log(1-u)$, and the optimal solution
to the original problem $\bw^*$ can be expressed with optimal
$\lambda_{\bx_i}^*$ and $\lambda_{\bz_i}^*$ as
$$
\bw^* = \frac{1}{C}\left(\sum_i \lambda_{\bx_i}^* \bx_i - \sum_i
  \lambda_{\bz_i}^* G_\theta(\bz_i)\right) 
$$
\end{claim}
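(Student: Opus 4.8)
The plan is to derive the dual by standard Lagrangian (conjugate) duality, exploiting the fact that the primal is strongly convex in $\bw$. First I would decouple the linear scores from the nonlinear logistic losses by introducing auxiliary scalars $a_i = \bw^\top\bx_i$ and $b_i = \bw^\top G_\theta(\bz_i)$, rewriting the primal as a minimization over $\bw$, $\{a_i\}$, $\{b_i\}$ subject to these affine equality constraints. Attaching multipliers $\alpha_i$ and $\beta_i$ to the two families of constraints, the Lagrangian splits into a quadratic-plus-linear part in $\bw$ and per-sample terms that couple each $a_i$ (resp.\ $b_i$) only to $\alpha_i$ (resp.\ $\beta_i$), so the dual function is an infimum that separates across these blocks.

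Next I would minimize the Lagrangian blockwise. Minimizing over $\bw$ is a strongly convex quadratic whose stationarity condition gives $\bw = -\tfrac{1}{C}(\sum_i\alpha_i\bx_i + \sum_i\beta_i G_\theta(\bz_i))$; substituting this back produces a term of the form $-\tfrac{1}{2C}\|\cdots\|^2$. Identifying $\alpha_i = -\lambda_{\bx_i}$ and $\beta_i = \lambda_{\bz_i}$ then matches both the stated formula for $\bw^*$ and the moment-matching term $-\tfrac{1}{2C}\|\sum_i\lambda_{\bx_i}\bx_i - \sum_i\lambda_{\bz_i}G_\theta(\bz_i)\|^2$.

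The crux is the per-sample minimizations over $a_i$ and $b_i$, which are exactly negated Fenchel conjugates of the logistic loss. The key fact I would establish is that for $\phi(t)=\log(1+e^t)$ one has $\phi^*(s)=s\log s+(1-s)\log(1-s)=-H(s)$ on $s\in[0,1]$ and $+\infty$ outside, obtained by solving the first-order condition $e^t/(1+e^t)=s$. With the $\tfrac{1}{2n}$ prefactor, $\min_{a_i}[\tfrac{1}{2n}\log(1+e^{-a_i}) - \alpha_i a_i]$ evaluates to $\tfrac{1}{2n}H(2n\lambda_{\bx_i})$, and finiteness of the conjugate forces the box constraint $0\le 2n\lambda_{\bx_i}\le 1$; the $b_i$ minimization works identically, giving $\tfrac{1}{2n}H(2n\lambda_{\bz_i})$ together with $0\le\lambda_{\bz_i}\le\tfrac{1}{2n}$. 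Assembling these three pieces reproduces exactly $g(\theta,\lambda)$ with its constraints.

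Finally, I would invoke strong duality: the primal objective is convex and in fact strongly convex in $\bw$ thanks to the $\tfrac{C}{2}\|\bw\|^2$ term, and all constraints are affine, so Slater's condition holds trivially and there is no duality gap; hence $\max_\lambda g(\theta,\lambda)$ equals the primal minimum, and the primal optimizer is recovered from the optimal multipliers through the stationarity relation, yielding the claimed $\bw^*$. I expect the main obstacle to be purely bookkeeping, namely pinning down the sign and scaling conventions for $\alpha_i,\beta_i$ so that the conjugate's domain $[0,1]$ maps to the box $[0,\tfrac{1}{2n}]$ and the multipliers reproduce the stated $\bw^*$, rather than any conceptual difficulty.
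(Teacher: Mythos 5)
Your proposal is correct and follows essentially the same route as the paper's proof: both introduce auxiliary variables for the linear scores, dualize the resulting equality constraints with multipliers, minimize the Lagrangian blockwise over $\bw$ and the per-sample scores, and recover the entropy terms and box constraints from the logistic loss (the paper obtains them via stationarity conditions and back-substitution, which is exactly the Fenchel conjugate computation $\phi^*(s) = -H(s)$ you invoke explicitly). The only differences are bookkeeping (the paper flips the sign of the generated-sample score so both losses share one form) and your explicit appeal to strong duality, which the paper leaves implicit.
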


\begin{proof}
We introduce auxillary variables $\xi_{\bx_i} = \bw^\top\bx_i$ and
$\xi_{\bz_i}=-\bw^\top G_\theta(\bz_i)$, the original minimization problem can
then be transformed into the following equality constrained problem
\begin{eqnarray}
  & \min_\bw & \frac{C}{2} \|\bw\|^2_2 + \frac{1}{2n}\sum_i \log(1 +
  e^{-\xi_{\bx_i}}) + \frac{1}{2n}\sum_i \log (1 + e^{-\xi_{\bz_i}})) \\
  & \suchthat & \forall i, \quad \xi_{\bx_i} = \bw^\top\bx_i, \quad \xi_{\bz_i}
  = -\bw^\top G_\theta(\bz_i). \nonumber
\end{eqnarray}
The corresponding Lagrangian has the following form
\begin{eqnarray}
  L(\bw, \xi, \lambda, \theta) &=& \frac{C}{2} \|\bw\|^2_2 + \frac{1}{2n}\sum_i \log(1 +
  e^{-\xi_{\bx_i}}) + \frac{1}{2n}\sum_i \log (1 + e^{-\xi_{\bz_i}})) \nonumber
  \\
  & & + \sum_i \lambda_{\bx_i}(\xi_{\bx_i} - \bw^\top\bx_i) + \sum_i
  \lambda_{\bz_i}(\xi_{\bz_i} + \bw^\top G_\theta(\bz_i))
\end{eqnarray}
Set the derivatives with respect to the primal variables to 0, we get
\begin{eqnarray}
  \pdiff{L}{\bw} &=& C\bw - \sum_i \lambda_{\bx_i} \bx_i + \sum_i
  \lambda_{\bz_i} G_\theta(\bz_i) = 0 \\
  \label{eq:linear-gan-xi-x}
  \pdiff{L}{\xi_{\bx_i}} &=& -\frac{1}{2n} \frac{e^{-\xi_{\bx_i}}}{1 +
e^{-\xi_{\bx_i}}} + \lambda_{\bx_i} = 0 \\
  \label{eq:linear-gan-xi-z}
\pdiff{L}{\xi_{\bz_i}} &=& -\frac{1}{2n} \frac{e^{-\xi_{\bz_i}}}{1 +
e^{-\xi_{\bz_i}}} + \lambda_{\bz_i} = 0.
\end{eqnarray}
We can then represent the primal variables using the $\lambda$'s,
\begin{eqnarray}
  \label{eq:linear-gan-w}
  \bw &=& \frac{1}{C}\left(\sum_i \lambda_{\bx_i} \bx_i - \sum_i \lambda_{\bz_i}
G_\theta(\bz_i)\right) \\
  \xi_{\bx_i} &=& \log \frac{1-2n\lambda_{\bx_i}}{2n\lambda_{\bx_i}} \\
  \xi_{\bz_i} &=& \log \frac{1-2n\lambda_{\bz_i}}{2n\lambda_{\bz_i}}.
\end{eqnarray}
Eq.(\ref{eq:linear-gan-xi-x}) and (\ref{eq:linear-gan-xi-z}) also introduced
extra constraints on $\lambda_{\bx_i}$ and $\lambda_{\bz_i}$, as follows
\begin{equation}
  \forall i, \quad 0 \le \lambda_{\bx_i} \le \frac{1}{2n}, \quad 0 \le
  \lambda_{\bz_i} \le \frac{1}{2n}.
\end{equation}
Substituting the primal variables back to the Lagrangian, we get the dual
objective
\begin{eqnarray}
  g(\theta, \lambda) &=& \frac{C}{2}\left\|\frac{1}{C}\left(\sum_i \lambda_{\bx_i} - \sum_i
\lambda_{\bz_i}\right)\right\|_2^2 - \frac{1}{2n}\log (1 - 2n\lambda_{\bx_i}) -
  \frac{1}{2n}\log(1 - 2n\lambda_{\bz_i}) \nonumber\\
  && + \sum_i \lambda_{\bx_i} \log\frac{1-2n\lambda_{\bx_i}}{2n\lambda_{\bx_i}}
  + \sum_i \lambda_{\bz_i} \log\frac{1-2n\lambda_{\bz_i}}{2n\lambda_{\bz_i}} +
  \frac{1}{C}\left\|\sum_i \lambda_{\bx_i}\bx_i -
  \lambda_{\bz_i}G_\theta(\bz_i)\right\|_2^2 \nonumber\\
  &=& -\frac{1}{2C}\left\|\sum_i \lambda_{\bx_i} \bx_i - \sum_i \lambda_{\bz_i}
  G_\theta(\bz_i)\right\|^2 + \frac{1}{2n}\sum_i H(2n\lambda_{\bx_i}) +
  \frac{1}{2n}\sum_i H(2n\lambda_{\bz_i})
\end{eqnarray}
The overall dual problem is therefore
\begin{eqnarray}
  & \max_\lambda & g(\theta, \lambda) = -\frac{1}{2C}\left\|\sum_i \lambda_{\bx_i} \bx_i - \sum_i \lambda_{\bz_i}
  G_\theta(\bz_i)\right\|^2 + \frac{1}{2n}\sum_i H(2n\lambda_{\bx_i}) +
  \frac{1}{2n}\sum_i H(2n\lambda_{\bz_i}), \nonumber\\
  & \suchthat & \forall i, \quad 0 \le \lambda_{\bx_i} \le \frac{1}{2n}, \quad 0 \le
  \lambda_{\bz_i} \le \frac{1}{2n}.
\end{eqnarray}
Once we have solved for the optimal $\lambda^*$, we can recover the optimal
primal solution $\bw^*$ using \eqref{eq:linear-gan-w}.
\end{proof}


\section{Setting the step size $\Delta_k$ in the trust-region method}

Pursuing this trust-region intuition, we can alternatively choose $\Delta_k$
based on the accuracy of the model $m_{k,\theta}(\bs)$. To this end it is often
convenient to introduce the acceptance ratio
\be
\rho = \frac{f(\bw_k,\theta)-f(\bw_k+\bs,\theta)}{f(\bw_k,\theta) - m_{k,\theta}(\bs)},
\label{eq:AccRatio}
\ee
which compares the real function value difference to the modeled one. If the
acceptance ratio $\rho$ deviates significantly from 1 on either side, we may opt
to decrease the trust region $\Delta_k$ and resolve the program given in
Eq. (4) of the main paper, instead of accepting the step.

Intuitively, if $\rho$ specified in \equref{eq:AccRatio} is far from 1, the model function does not fit well the original objective. To obtain a better fit we resolve the program using a smaller trust region size $\Delta_k$.

\section{Proof of Claim 2}

\begin{claim}
\label{clm:dualModel}
The dual program to $\min_\bs m_{k,\theta}(\bs)$ s.t.~$\frac{1}{2}\|\bs\|_2^2
\leq \Delta_k$ with model function given as 

\begin{align*}
  m_{k, \theta}(\bs) =\,& \frac{C}{2}\|\bw_k + \bs\|_2^2 + \frac{1}{2n}\sum_i
  \log\left(1 + \exp\left(-F(\bw_k, \bx_i) - \bs^\top\pdiff{F(\bw_k,
  \bx_i)}{\bw}\right)\right) \\
  & + \frac{1}{2n}\sum_i \log \left(1 + \exp\left(F(\bw_k, G_\theta(\bz_i)) +
  \bs^\top\pdiff{F(\bw_k, G_\theta(\bz_i))}{\bw}\right)\right)
\end{align*}

is the following:
\begin{eqnarray*}
  \max_{\lambda}
  &&\frac{C}{2}\|\bw_k\|_2^2 -
  \frac{1}{2(C+\lambda_T)}\left\|-C\bw_k + \sum_i \lambda_{\bx_i}\pdiff{F(\bw_k,
  \bx_i)}{\bw} - \sum_i \lambda_{\bz_i}\pdiff{F(\bw_k,
  G_\theta(\bz_i))}{\bw}\right\|_2^2 \\
  &&+ \frac{1}{2n}\sum_i H(2n\lambda_{\bx_i}) + \frac{1}{2n}\sum_i
H(2n\lambda_{\bz_i}) - \sum_i \lambda_{\bx_i} F_{\bx_i} + \sum_i \lambda_{\bz_i}
F_{\bz_i} -\lambda_T \Delta_k \\
\suchthat && \lambda_T \geq 0 \quad\quad \forall i,\quad 0\leq\lambda_{\bx_i}\leq
\frac{1}{2n}, \quad 0\leq \lambda_{\bz_i}\leq \frac{1}{2n}.
\end{eqnarray*}
The optimal $\bs^*$ to the original problem can
be expressed through optimal $\lambda_T^*, \lambda_{\bx_i}^*, \lambda_{\bz_i}^*$ as
\begin{equation*}
  \bs^*= \frac{1}{C + \lambda_T^*}\left(\sum_i \lambda_{\bx_i}^*\pdiff{F(\bw_k,
      \bx_i)}{\bw}
    - \sum_i \lambda_{\bz_i}^*\pdiff{F(\bw_k, \bz_i)}{\bw}\right) - \frac{C}{C +
    \lambda_T^*}\bw_k
\end{equation*}
\end{claim}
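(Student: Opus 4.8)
The plan is to mirror the proof of \clmref{clm:dualLogLoss}, now carrying along the extra trust-region constraint. First I would decouple the logistic terms from the step $\bs$ by introducing auxiliary variables $\xi_{\bx_i} = F(\bw_k,\bx_i) + \bs^\top\pdiff{F(\bw_k,\bx_i)}{\bw}$ and $\xi_{\bz_i} = -F(\bw_k,G_\theta(\bz_i)) - \bs^\top\pdiff{F(\bw_k,G_\theta(\bz_i))}{\bw}$, so that the two sums in $m_{k,\theta}$ become $\frac{1}{2n}\sum_i\log(1+e^{-\xi_{\bx_i}})$ and $\frac{1}{2n}\sum_i\log(1+e^{-\xi_{\bz_i}})$. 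This turns \equref{eq:ModelPrimal} into a convex program with linear equality constraints defining the $\xi$'s and one convex inequality constraint $\frac12\|\bs\|_2^2\le\Delta_k$. I would then form the Lagrangian, attaching multipliers $\lambda_{\bx_i}$, $\lambda_{\bz_i}$ to the equalities and $\lambda_T\ge 0$ to the trust-region inequality.

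The second step is stationarity. Differentiating the Lagrangian in $\xi_{\bx_i}$ and $\xi_{\bz_i}$ reproduces exactly the relations from \clmref{clm:dualLogLoss}: the sigmoid identity forces $\xi_{\bx_i}=\log\frac{1-2n\lambda_{\bx_i}}{2n\lambda_{\bx_i}}$ (and likewise for $\bz$), yields the box constraints $0\le\lambda_{\bx_i},\lambda_{\bz_i}\le\frac1{2n}$, and makes $\frac1{2n}\log(1+e^{-\xi_{\bx_i}})+\lambda_{\bx_i}\xi_{\bx_i}$ collapse to $\frac1{2n}H(2n\lambda_{\bx_i})$. The genuinely new computation is the $\bs$-stationarity: setting $\partial_\bs L=0$ gives $(C+\lambda_T)\bs = -C\bw_k + \sum_i\lambda_{\bx_i}\pdiff{F(\bw_k,\bx_i)}{\bw} - \sum_i\lambda_{\bz_i}\pdiff{F(\bw_k,G_\theta(\bz_i))}{\bw}$, which is precisely the claimed $\bs^*$. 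Note the trust-region multiplier enters additively with $C$, which is the source of the $C+\lambda_T$ denominators appearing throughout the dual.

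Third, I would substitute back. Collecting all $\bs$-dependent pieces — the quadratic $\frac{C}{2}\|\bw_k+\bs\|_2^2$, the trust-region term $\frac{\lambda_T}{2}\|\bs\|_2^2$, and the linear term $-\bs^\top\bigl(\sum_i\lambda_{\bx_i}\pdiff{F(\bw_k,\bx_i)}{\bw} - \sum_i\lambda_{\bz_i}\pdiff{F(\bw_k,G_\theta(\bz_i))}{\bw}\bigr)$ — I would show the $C\bw_k^\top\bs$ cross terms cancel and, after inserting $\bs^*$, the remainder reduces to $\frac{C}{2}\|\bw_k\|_2^2 - \frac{1}{2(C+\lambda_T)}\|{-C\bw_k}+\sum_i\lambda_{\bx_i}\pdiff{F(\bw_k,\bx_i)}{\bw}-\sum_i\lambda_{\bz_i}\pdiff{F(\bw_k,G_\theta(\bz_i))}{\bw}\|_2^2$. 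The leftover constant parts of the equality multipliers contribute $-\sum_i\lambda_{\bx_i}F_{\bx_i}+\sum_i\lambda_{\bz_i}F_{\bz_i}$, while the trust-region term contributes $-\lambda_T\Delta_k$; together with the two entropy sums this matches the stated dual exactly.

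Finally, for strong duality I would invoke convexity of $m_{k,\theta}$ in $\bs$ (already observed before the claim) together with Slater's condition: the single inequality constraint is convex and $\bs=0$ is strictly feasible whenever $\Delta_k>0$, so the KKT conditions used above are both necessary and sufficient and the dual optimum equals the primal optimum. I expect the main obstacle to be the $\bs$-elimination bookkeeping in the third step — specifically verifying that the cross terms in $\bw_k$ cancel and that the quadratic in $\bs$ collapses cleanly into the single $\frac{1}{2(C+\lambda_T)}\|\cdot\|_2^2$ term — rather than anything conceptually deep, since the logistic and entropy parts are inherited essentially verbatim from \clmref{clm:dualLogLoss}.
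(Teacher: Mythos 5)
Your proposal is correct and follows essentially the same route as the paper's proof: introduce auxiliary variables to decouple the logistic terms, form the Lagrangian with multipliers $\lambda_{\bx_i},\lambda_{\bz_i}$ for the equalities and $\lambda_T$ for the trust region, use stationarity in $\bs$ and the $\xi$'s to get the box constraints, the entropy terms, and the $(C+\lambda_T)$-scaled step, then substitute back. The only differences are cosmetic — you fold the constants $F_{\bx_i}, F_{\bz_i}$ into the auxiliary variables so the linear terms $-\sum_i\lambda_{\bx_i}F_{\bx_i}+\sum_i\lambda_{\bz_i}F_{\bz_i}$ appear directly in the Lagrangian rather than emerging from the $\xi$ stationarity values, and you explicitly invoke Slater's condition for strong duality, a point the paper leaves implicit.
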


\begin{proof}  In this optimization problem, the free variable is $\bs$.  We
  introduce short hand notations $F_{\bx_i} = F(\bw_k, \bx_i),
  F_{\bz_i}=F(\bw_k, G_\theta(\bz_i)), \nabla F_{\bx_i} = \pdiff{F(\bw_k, \bx_i)}{\bw}$
  and $\nabla F_{\bz_i} = \pdiff{F(\bw_k, G_\theta(\bz_i))}{\bw}$.  With these
  extra notations we can simplify the primal problem as

  \begin{equation}
    m_{k, \theta}(\bs) = \frac{C}{2}\|\bw_k + \bs\|_2^2 + \frac{1}{2n}\sum_i \log\left(1 +
  e^{-F_{\bx_i}-\bs^\top\nabla F_{\bx_i}}\right) + \frac{1}{2n}\sum_i
    \log\left(1 + e^{F_{\bz_i} + \bs^\top\nabla F_{\bz_i}}\right)
  \end{equation}

  Again, we introduce auxillary variables $\xi_{\bx_i} = \bs^\top\nabla
  F_{\bx_i}$ and
  $\xi_{\bz_i}=-\bs^\top \nabla F_{\bz_i}$, and obtain the following constrained
  optimization problem
  \begin{align}
    \min_{\bs, \xi} \quad& \frac{C}{2}\|\bw_k + \bs\|_2^2 + \frac{1}{2n}\sum_i \log\left(1 +
    e^{-F_{\bx_i}- \xi_{\bx_i}}\right) + \frac{1}{2n}\sum_i
    \log\left(1 + e^{F_{\bz_i} - \xi_{\bz_i} }\right) \\
    \suchthat \quad& \xi_{\bx_i} = \bs^\top\nabla F_{\bx_i}, \quad
    \xi_{\bz_i} = -\bs^\top\nabla F_{\bz_i}, \quad \forall i \nonumber \\
    & \frac{1}{2}\|\bs\|^2 \le \Delta_k \nonumber
  \end{align}
  
  The corresponding Lagrangian is the following
  \begin{align}
    L(\bw, \xi, \lambda) =&\; \frac{C}{2}\|\bw_k + \bs\|_2^2 + \frac{1}{2n}\sum_i \log\left(1 +
  e^{-F_{\bx_i} - \xi_{\bx_i}}\right) + \frac{1}{2n}\sum_i
  \log\left(1 + e^{F_{\bz_i} - \xi_{\bz_i}}\right) \nonumber\\
  &\; + \sum_{i} \lambda_{\bx_i}(\xi_{\bx_i} - \bs^\top\nabla F_{\bx_i}) +
  \sum_{i} \lambda_{\bz_i}(\xi_{\bz_i} + \bs^\top \nabla F_{\bz_i}) +
  \lambda_T \left(\frac{1}{2}\|\bs\|^2 - \Delta_k\right)
  \end{align}

  Setting the derivatives of the primal variables with respect to the Lagrangian
  to 0, we get
  \begin{eqnarray}
    \pdiff{L}{\bs} &=& C(\bw_k + \bs) - \sum_i\lambda_{\bx_i}\nabla F_{\bx_i}
    + \sum_i \lambda_{\bz_i}\nabla F_{\bz_i} + \lambda_T \bs = 0 \\
    \pdiff{L}{\xi_{\bx_i}} &=& -\frac{1}{2n}\frac{e^{-F_{\bx_i}-\xi_{\bx_i}}}{1 +
  e^{-F_{\bx_i}-\xi_{\bx_i}}} + \lambda_{\bx_i} = 0 \\
  \pdiff{L}{\xi_{\bz_i}} &=& -\frac{1}{2n}\frac{e^{F_{\bz_i}-\xi_{\bz_i}}}{1 +
e^{F_{\bz_i} - \xi_{\bz_i}}} + \lambda_{\bz_i} = 0
  \end{eqnarray}
  Therefore
  \begin{eqnarray}
    \bs &=& \frac{1}{C + \lambda_T}\left(\sum_i \lambda_{\bx_i}\nabla F_{\bx_i}
  - \sum_i \lambda_{\bz_i}\nabla F_{\bz_i}\right) - \frac{C}{C + \lambda_T}\bw_k \\
  \xi_{\bx_i} &=& \log\frac{1-2n\lambda_{\bx_i}}{2n\lambda_{\bx_i}} - F_{\bx_i}
  \\
  \xi_{\bz_i} &=& \log\frac{1-2n\lambda_{\bz_i}}{2n\lambda_{\bz_i}} + F_{\bz_i},
  \end{eqnarray}
  which includes the equation for $\bs^*$.

  Next we substitute these back to the Lagrangian to obtain the dual objective.
  We introduce another short hand notation $\square=\sum_i \lambda_{\bx_i}
  \nabla F_{\bx_i} - \sum_i \lambda_{\bz_i} \nabla F_{\bz_i}$, then
  $\bs=\frac{1}{C+\lambda_T}\square - \frac{C}{C+\lambda_T}\bw_k$, and the dual
  objective can be written as
  \begin{eqnarray}
    g(\lambda) &=& \frac{C}{2}\left\|\frac{1}{C+\lambda_T}\left(\lambda_T \bw_k +
  \square\right)\right\|^2 -\frac{1}{2n}\sum_i \log(1 - 2n\lambda_{\bx_i}) - \frac{1}{2n}\sum_i \log(1 -
2n\lambda_{\bz_i}) \nonumber\\
&& +\sum_i \lambda_{\bx_i}\left(\log\frac{1-2n\lambda_{\bx_i}}{2n\lambda_{\bx_i}} -
F_{\bx_i} - \frac{1}{C+\lambda_T}\left(\square -
C\bw_k\right)^\top\nabla
F_{\bx_i}\right) \nonumber \\
&& + \sum_i \lambda_{\bz_i}\left(\log\frac{1-2n\lambda_{\bz_i}}{2n\lambda_{\bz_i}} +
F_{\bz_i} + \frac{1}{C+\lambda_T}\left(\square -
C\bw_k\right)^\top\nabla
F_{\bz_i}\right) \nonumber \\
&& + \frac{\lambda_T}{2}\left\|\frac{1}{C+\lambda_T}\left(\square -
C\bw_k\right)\right\|^2 - \lambda_T \Delta_k \nonumber \\
&=& \frac{1}{2n}\sum_i H(2n\lambda_{\bx_i}) + \frac{1}{2n}\sum_i
H(2n\lambda_{\bz_i}) - \sum_i \lambda_{\bx_i} F_{\bx_i} + \sum_i \lambda_{\bz_i}
F_{\bz_i} -\lambda_T \Delta_k \nonumber\\
&& + \frac{C}{2(C+\lambda_T)^2}\|\lambda_T \bw_k + \square\|^2 -
\frac{1}{C+\lambda_T}(\square - C\bw_k)^\top\square +
\frac{\lambda_T}{2(C+\lambda_T)^2}\|\square - C\bw_k\|^2 \nonumber \\
&=& \frac{1}{2n}\sum_i H(2n\lambda_{\bx_i}) + \frac{1}{2n}\sum_i
H(2n\lambda_{\bz_i}) - \sum_i \lambda_{\bx_i} F_{\bx_i} + \sum_i \lambda_{\bz_i}
F_{\bz_i} -\lambda_T \Delta_k \nonumber\\
&&-\frac{1}{2(C+\lambda_T)}\|\square - C\bw_k\|^2 + \frac{C}{2}\|\bw_k\|^2
  \end{eqnarray}
  which is exactly the dual objective in the claim.
\end{proof}

\section{More Experiment Details}

\subsection{Toy dataset}

The toy 2D dataset used in the paper consists of a mixture of 5 2D Gaussian
components, the Gaussians have covariance matrix of $0.1I$ with means being
uniformly spaced on a circle of radius 2.

Here we present additional results on an extra 8-mode dataset, where
each of the 8
components in the 8-mode dataset is a Gaussian distribution with a covariance
matrix of $0.02I$, and again the means of the components are arranged on a circle of
radius 2.
We use both
datasets to investigate properties such as low probability regions and low
separation of modes.

To train the linear GAN we employ RBF features based on a set of anchor points
$\{x_1, ..., x_n\}$, then for an arbitrary $x$, the features for $x$ is computed
as the following
$$
\phi(x) = \left[\frac{\exp(-\frac{1}{T}\|x-x_1\|^2)}{Z}, ...,
\frac{\exp(-\frac{1}{T}\|x-x_n\|^2)}{Z}\right]^\top, \quad \text{where} \quad Z=\sum_i
\exp(-\frac{1}{T}\|x-x_i\|^2).
$$
We set $T$ to 0.2 for all experiments.

Experiment results are shown in \figref{fig:ToyDataSampleResults}.




\begin{figure*}[t]
\centering
\begin{tabular}{c||cc}
&Ours & Standard GAN \\\hline\hline
\rotatebox{90}{8-mode data}&
\includegraphics[trim={0cm 0cm 0cm 0cm},clip,width=0.3\textwidth]{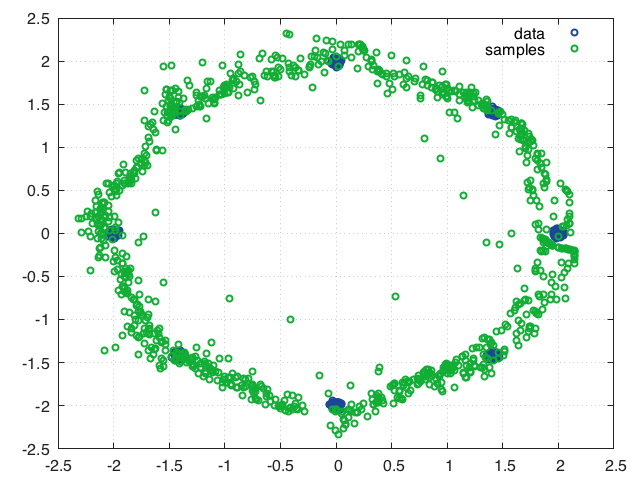}&
\includegraphics[trim={0cm 0cm 0cm
0cm},clip,width=0.3\textwidth]{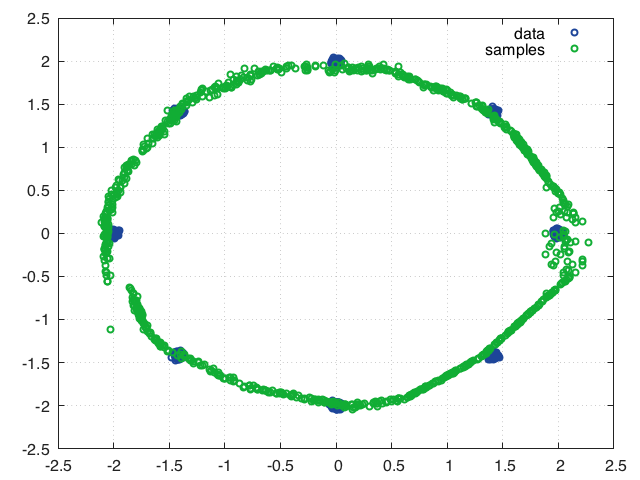}\\\hline
\rotatebox{90}{5-mode data}&
\includegraphics[trim={0cm 0cm 0cm 0cm},clip,width=0.3\textwidth]{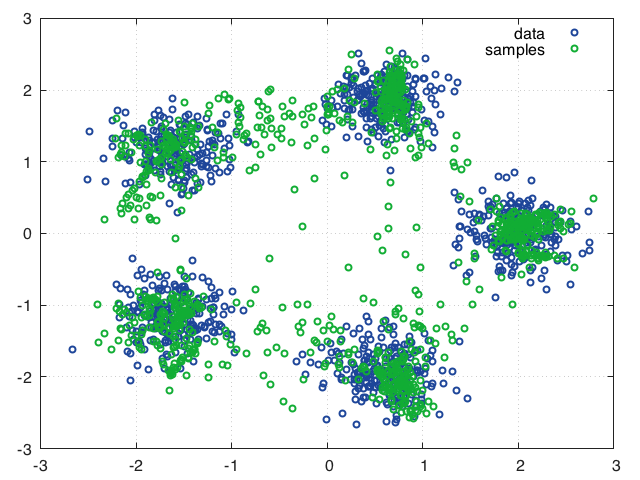}&
\includegraphics[trim={0cm 0cm 0cm 0cm},clip,width=0.3\textwidth]{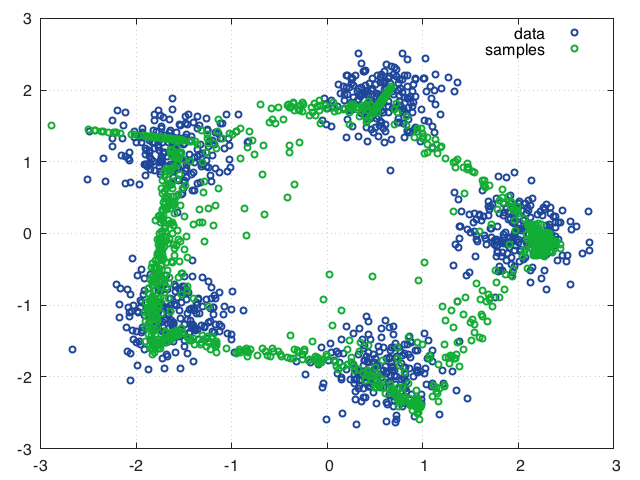}
\end{tabular}
\caption{Original data (blue) and samples obtained from the learned generator
(green) for our approach (left) and standard GAN (right). We show results for
8-mode (top) and 5-mode data (bottom). For each of the approaches we demonstrate
usage of RBF features.}
\label{fig:ToyDataSampleResults}
\end{figure*}

\subsection{MNIST Model Details}

We used a generator architecture similar to that in [17].
Instead of directly projecting the initial hidden variables to $4\times 4$ images, we
first feed it through a fully-connected hidden layer. In the intermediate layers,
we use $4\times 4$ upconvolution kernels with stride 2, ReLU activation, and batch normalization.
At the output layer, we fed it through 1 extra $3\times 3$ convolution layer without changing 
the image size. Instead of Tanh output activation, we use a Sigmoid function, and our 
data takes pixel value between 0 and 1. For all of our experiments, our initial 
hidden dimension is 32. For discriminator, our pretrained MNIST convnet uses 3
3x3 convolution layers with max pooling and ReLU activation, and 2 fully connected
hidden layers with ReLU activation as well. 

\subsection{CIFAR-10}

For the generator, we use an architecture similar
to the one described for MNIST.  
For the discriminator, it is similar to MNIST as well, except before each max pooling
operation there are 2 convolutional and ReLU layers instead of 1. The width of the 
network here is also greater than the one for MNIST experiment.



We provide in
\figref{fig:nonlin-cifar-supp} the primal objective, the discriminator accuracy
and the model function value (primal approx.) throughout training, and top samples
for each class.  From the training curves we see that the proposed trust-region
cost-linearization technique is significantly more stable than either the score
linearization or standard GANs.  The score linearization method does a better
job approximating the discriminator at the begining, but then suffered from bad
solution to the dual problem given by the Ipopt solver.

\begin{figure*}[t]
  \centering
  \begin{tabular}{ccc}
    \includegraphics[width=0.25\textwidth]{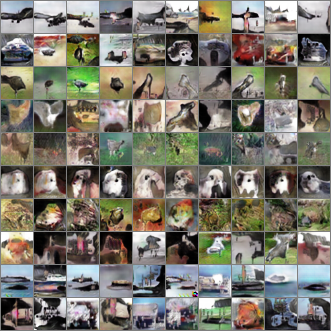} &
    \includegraphics[width=0.25\textwidth]{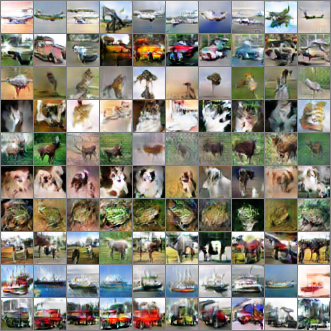} &
    \includegraphics[width=0.25\textwidth]{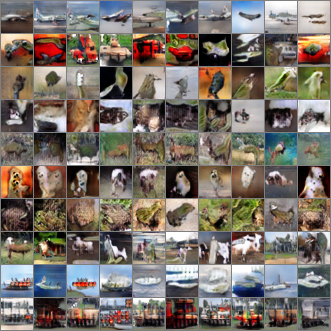}\\

    \includegraphics[width=0.3\textwidth]{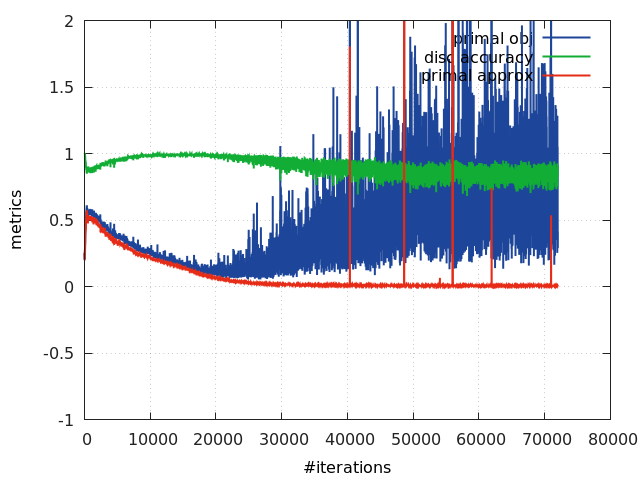} &
    \includegraphics[width=0.3\textwidth]{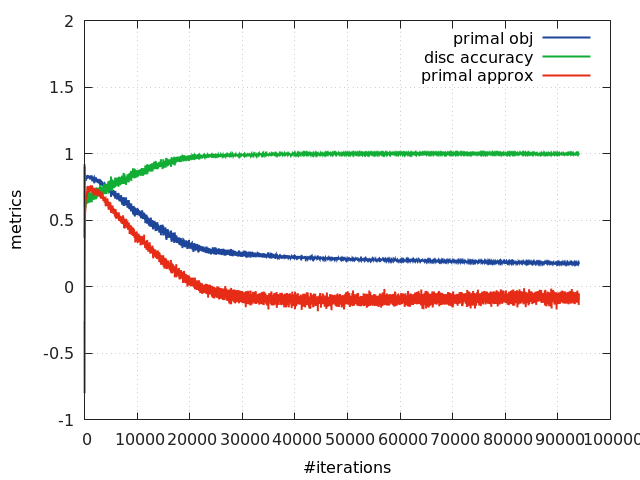} &
    \includegraphics[width=0.3\textwidth]{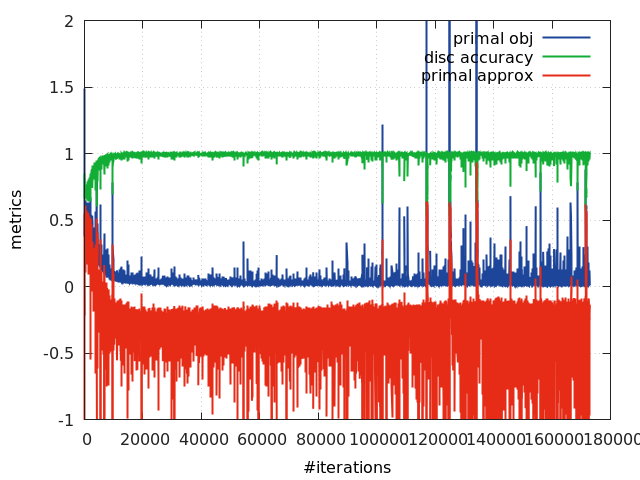}\\
    Score Linearization & Cost Linearization & GAN\\
  \end{tabular}
  \caption{Nonlinear discriminator experiments on CIFAR-10, learning curves and 
  samples organized by class.}
  \label{fig:nonlin-cifar-supp}
\end{figure*}

\end{document}